\documentclass{article}
\usepackage{iclr2027_conference,times}
\usepackage{amsmath,amssymb,amsfonts}
\usepackage{amsthm}
\newtheorem{proposition}{Proposition}
\newtheorem{lemma}[proposition]{Lemma}
\newtheorem{corollary}[proposition]{Corollary}
\usepackage{booktabs}
\usepackage{longtable}
\usepackage{graphicx}
\usepackage{xcolor}
\usepackage{hyperref}
\usepackage{url}
\usepackage{tikz}
\usetikzlibrary{arrows.meta,positioning,calc}

\usepackage{amsmath,amsfonts,bm}

\def\eqref#1{equation~\ref{#1}}

\def\1{\bm{1}}

\DeclareMathAlphabet{\mathsfit}{\encodingdefault}{\sfdefault}{m}{sl}
\SetMathAlphabet{\mathsfit}{bold}{\encodingdefault}{\sfdefault}{bx}{n}

\newcommand{\R}{\mathbb{R}}

\usepackage{enumitem}
\setlist{nosep}

\graphicspath{{figures/}}

\newcommand{\codeurl}{\url{https://github.com/richardzhewang/communication-map}}

\iclrfinalcopy  

\title{The Communication Map of a Transformer}

\author{Richard Zhe Wang\\
St.\ John Fisher University\\
\texttt{zwang@sjfc.edu, rzwang.research@gmail.com}}

\begin{document}

\maketitle
\lhead{Preprint}

\begin{abstract}
The components of a transformer communicate by writing to and reading
from a shared residual stream, and mechanistic interpretability has
mapped these connections by hand, one circuit at a time. We present
the \emph{communication map}, which charts every potential
communication channel in a language model from weights alone,
generalizing the composition score of Elhage et al.\ (2021) into a
single \emph{coupling coefficient} covering all $18$ connection
classes, from entire attention head circuits to single neurons. The
census of all candidate channels, from $6.3\times10^{8}$ in GPT-2
to $1.3\times10^{11}$ in Pythia-6.9B, finds that $70$--$89\%$ of head
pairs are oriented far from chance, some coupled strongly and others
actively avoiding each other. The full map costs $15$ seconds for
GPT-2 and $11$ minutes for Pythia-6.9B on one consumer GPU. Two
applications demonstrate the utility of the map. In Application 1, the strongest
head-to-head couplings recover the known induction circuits blind and
group them into communities, and ablating one such community destroys the
model's in-context copying. In Application 2, pooling every head's
coupling coefficients identifies a distinct two-dimensional stream
subspace, whose deletion abolishes the induction capability in six
models up to Pythia-6.9B. This subspace is different from those
identified by either activation PCA or outlier dimensions. We release
the map, the statistical machinery, and the intervention suite.\footnote{\codeurl}
\end{abstract}

\section{Introduction}
\label{sec:intro}

Functionally, a transformer \citep{vaswani2017attention} is a collection of a few hundred small components, such as attention heads and MLP neurons, all connected to a single shared medium of communication, the \emph{residual stream}, which all attention heads and individual neurons read from and write back to \citep{elhage2021mathematical}. There is no other private channel connecting the individual components of a transformer. The mathematics of superposition further allows this $d$-dimensional residual stream to carry far more nearly orthogonal dimensions of information in proportion to the exponential of $d$, thereby enabling the residual stream to act as a public information ``highway'' carrying many millions of ``channels'' of communication across individual components \citep{elhage2022superposition}. 

However, not a lot is known in the literature about how individual heads, neurons and other components of a transformer utilize this information highway.\textit{ Who is talking to whom? Do heads in one layer mainly talk to the heads in the next layer up or do they communicate long-range across multiple layers? What sub-networks do they form? Do they share common information channels/subspaces or prefer one-to-one communication in more ``private'' channels/subspaces?} These questions go to the heart of understanding the inner workings of transformers but, to our knowledge, largely remain unanswered in the mechanistic interpretability literature \citep{ferrando2024primer,rai2024practical}. This paper answers these questions by building the first communication map of an entire transformer, from its weights alone.

Mechanistic interpretability has examined individual circuits and connections of a transformer. \citet{elhage2021mathematical} defined
\emph{composition scores} measuring whether one attention head reads what
another wrote, and used them to explain \emph{induction heads} \citep{olsson2022induction}. \citet{wang2022interpretability}
hand-traced a circuit of some two dozen heads and identified indirect-object-identification (IOI) circuits. \citet{merullo2024talking}
showed that specific pairs of distant heads communicate over low-rank
channels. In every case, however, the analysis starts from a known
behavior and works backward to a handful of components. To our knowledge, no full transformer-wide communication maps across all components have ever been built. The closest attempt is by \citet{projectionkernel2026}, which computes a subspace-affinity
metric between attention heads but reports only the top twenty pairs,
excludes MLPs entirely, and explicitly cautions against using its
reference distribution for significance testing.

The key to creating the map is the residual stream, which is akin to a wide radio band that can accommodate millions of nearly interference-free channels of communication simultaneously. Many signals ride on the same $d$ dimensions, yet each component transmits and receives in its own particular subspaces, like their own ``frequencies.'' For instance, a head in Layer 1 can pass information directly to another head in Layer 5 via the residual stream, when the writer head's output directions align closely with the
reader head's, i.e., the reader and writer's directions' cosine is close to 1. The weights of a trained model's matrices therefore contain,
implicitly, a complete directory of who can talk to whom, over
which subspaces, and how strongly. \textit{The goal of this paper is to use
that information to reconstruct the communication map of an entire transformer 
and demonstrate its usefulness in two new applications.}

Specifically, we build the full-transformer communication map for GPT-2 small
\citep{radford2019language} and scale it up to GPT-2 medium, GPT-2 large, as well as the Pythia family (160M, 2.8B, 6.9B). Using GPT-2 small as an example, we map all connections among all $144$
heads, all $36{,}864$ MLP neurons, and the embedding, positional,
and unembedding matrices. Every directed
writer--reader pair is scored from weights with the \emph{coupling
coefficient}
$C = \lVert RW \rVert_F / (\lVert R \rVert_F \lVert W \rVert_F)$, generalized from \citet{elhage2021mathematical} for reader--writer pairs of any shape,
where $W$ is the writer's write matrix and $R$ the reader's reading
matrix.  The result is a graph of \emph{potential connectivity} for the whole model. After charting the map, we conduct two ablation experiments to demonstrate the applications of the map, yielding novel findings.


We contribute to the literature in the following areas:

\textbf{Full map.} First, we construct the first full communication map of a transformer (Section~\ref{sec:map}), including all $18$
  connection classes and billions of candidate connections or ``edges'' within minutes, and provide the statistical machinery for separating real connections from chance alignment. Among the results, we discover that attention heads communicate with each other over long ranges spanning many layers and heads form communities (Application 1). One community, in particular, holds all five induction heads and most of the IOI circuit, and ablating it destroys $93.8\%$ of the model's in-context copying.

\textbf{Theory.} Second, we provide a rigorous account of the coupling
  coefficient, $C$, which generalizes the composition score of
  \citet{elhage2021mathematical} to all $18$ classes of read-write connections. We discuss its geometric
  interpretation, its random chance level, rotational invariance, and other properties.
  
\textbf{Induction-critical subspace.} Third, in our Application 2 (Section~\ref{sec:interventions}), we use the pooled coupling coefficient $C$ of all heads of a transformer to 
identify a $2$-dimensional stream subspace critical to induction of the whole model. Deleting it destroys $82$--$96\%$ of the induction capability on six models from $124$M to $6.9$B parameters, well above those achieved by the prior literature's methods \citep{mu2018allbutthetop,timkey2021all}.

\textbf{Release.} Fourth, we release the map, the statistical machinery, and the intervention suite, with fixed seeds and one-command reproduction.

\section{Background and related work}
\label{sec:background}

The residual stream is the communication backbone of a transformer. Attention heads and MLP neurons in different
layers address one another by writing to and reading from the stream,
each through fixed linear maps into subspaces of its own, and all
writes to the residual stream combine additively \citep{elhage2021mathematical}. At every
token position $i$, layer $\ell$ receives a stream vector
$x_i \in \mathbb{R}^{d}$ ($d{=}768$ for GPT-2 small) that is, exactly,
\begin{equation}
x_i \;=\; \mathrm{emb}(t_i) \;+\; \mathrm{pos}(i) \;+\;
\sum_{c\,\in\,\text{components before }\ell} \Delta_c(i),
\label{eq:additivity}
\end{equation}
where $\Delta_c(i)$ is the write of component $c$. Every reader's input
therefore decomposes exactly into contributions from identifiable
writers, sharing the $d$ dimensions simultaneously
\citep{elhage2022superposition}. As Figure~\ref{fig:concept} shows, each head or MLP neuron is tuned to specific subspaces of the stream of its own, and one upstream writer
component can influence a downstream reader component precisely when the writer's
subspaces align with those of the reader.
Furthermore, the stream's \emph{interface
matrices}, where the model meets its initial inputs and final outputs, are the
token embedding $W_E$ and the positional embedding
$W_{\mathrm{pos}}$, which write into the stream before layer $0$,
and the unembedding $W_U$, which reads it after the last layer and converts it to tokens.

\begin{figure}[t]
\centering
\resizebox{0.8\linewidth}{!}{%
\begin{tikzpicture}[
  >=Stealth, font=\small,
  comp/.style={draw, rounded corners=2pt, minimum width=1.9cm,
               minimum height=0.72cm, fill=gray!8},
  lbl/.style={font=\footnotesize}]
\begin{scope}
  \fill[blue!6] (-0.4,-0.42) rectangle (5.4,0.42);
  \draw[->, very thick, blue!60!black] (-0.4,0) -- (5.6,0);
  \node[lbl, text=blue!60!black, anchor=west] at (-0.35,0.66)
      {residual stream $x\in\mathbb{R}^{d}$};
  \node[comp] (A) at (1.1,-1.55) {writer};
  \node[comp] (B) at (3.9, 1.55) {reader};
  \draw[->, thick] (A.north) -- (1.1,-0.42)
      node[midway, right, lbl] {$\Delta_w \in \mathrm{Col}(W_{OV,w})$};
  \draw[->, thick] (3.9,0.42) -- (B.south)
      node[midway, right, lbl] {$W_{QK,r} x$,\ $W_{OV,r} x$};
  \node[lbl, align=center] at (2.5,-2.45)
      {every component \emph{adds} its write;\\
       every component reads the \emph{sum}};
\end{scope}
\begin{scope}[xshift=8.1cm]
  \draw[rotate around={18:(0.9,0.35)}, fill=red!12, draw=red!60]
      (0.9,0.35) ellipse (1.05 and 0.34);
  \draw[rotate around={30:(0.9,0.35)}, fill=blue!12, draw=blue!60,
      fill opacity=0.7] (0.9,0.35) ellipse (1.0 and 0.30);
  \node[lbl, text=red!60!black] at (0.9,1.1) {the write subspace};
  \node[lbl, text=blue!60!black] at (0.9,-0.5) {the read subspace};
  \node[lbl, align=center] at (0.9,-1.15) {\textbf{aligned: an edge}\\
      $C \gg$ chance};
  \draw[rotate around={80:(4.3,0.35)}, fill=red!12, draw=red!60]
      (4.3,0.35) ellipse (1.0 and 0.28);
  \draw[rotate around={0:(4.3,0.35)}, fill=blue!12, draw=blue!60,
      fill opacity=0.7] (4.3,0.35) ellipse (1.0 and 0.28);
  \node[lbl, align=center] at (4.3,-1.15) {\textbf{unaligned: no edge}\\
      $C \approx$ chance};
\end{scope}
\end{tikzpicture}}
\caption{\textbf{The communication map's unit of analysis.} \emph{Left:}
components interact only by adding writes to, and linearly reading from,
a shared residual stream (Eq.~\ref{eq:additivity}). \emph{Right:} an edge
is a writer--reader pair whose write and read subspaces align far beyond
chance.}
\label{fig:concept}
\end{figure}


\citet{elhage2021mathematical} establishes that an upstream writer head can influence a downstream reader head through three channels, known as the \emph{K-composition}, \emph{Q-composition} and \emph{V-composition}. 
The \emph{K-composition} is $W_{QK,r} W_{OV,w}$ \citep{olsson2022induction} (with $W_{QK} = W_Q W_K^{\top}$ and $W_{OV} = W_O W_V^{\top}$, the head's attention-pattern form and write operator), the \emph{Q-composition} is $W_{QK,r}^{\top} W_{OV,w}$, and the \emph{V-composition} is $W_{OV,r} W_{OV,w}$, where the subscript $w$ marks the writer head and $r$ the reader.\footnote{A fourth term, in which the writer enters the keys and the queries at once, is bounded by the two one-sided channels and thus adds no independent channel of its own (see Appendix~\ref{app:attnblock} for proof).} The definitions of these matrices, along with the rest of this paper's conventions, shapes, and the attention block derivation are all in Appendices~\ref{app:conventions}--\ref{app:attnblock}. 

These three compositions form the conceptual foundation of single-circuit analyses \citep{olah2020zoom}, whose findings span induction circuits \citep{olsson2022induction},
indirect-object identification (IOI) via manual tracing
\citep{wang2022interpretability}, and individual low-rank head-pair
channels \citep{merullo2024talking}. Each MLP neuron is also an individual rank-one reader and writer, reading in one direction through
$a_m = r_m^{\top} x$ and writing in another direction via $\Delta_m = g(a_m)\, w_m$ \citep{geva2021transformer}. Neuron-level structure has its own literature, which probes what individual neurons encode and finds families of functionally universal neurons recurring across models \citep{gurnee2023finding,gurnee2024universal}. That literature asks what neurons represent, whereas the map asks whom they talk to.

A writer--reader pair whose subspaces align far beyond chance is a connection or ``edge'' in the communication graph, and we quantify connection strength with the coupling coefficient $C$ in Section~\ref{sec:map}.


Instead of the inherent geometry of reader/writer matrices, another paradigm of research focuses on activations a model produces. These works include automated circuit discovery and attribution
\citep{conmy2023acdc,syed2024attribution,ameisen2025circuit,marks2024sparse},
causal interventions
\citep{vig2020investigating,meng2022locating,chan2022causal,zhang2024towards},
head ablations \citep{michel2019sixteen,voita2019analyzing}, and
scaling studies which characterize traffic on
particular inputs, one behavior at a time, increasingly over learned
feature dictionaries rather than neurons
\citep{lieberum2023does,cunningham2024sparse,dunefsky2024transcoders}.

More recently, \citet{tang2026concept} patch label-aligned subspaces of the residual stream, showing that a roughly 70-dimensional subspace carries most of the causal effect of in-context task information in Llama-3-8B. \citet{xu2026closure} clusters attention heads into communities by co-activation and validates them by ablation against matched random controls. In comparison, our work on head communities in Section~\ref{sec:scan}
follows the same validation logic but is built entirely from the geometry of model weights instead of activations, and maps neurons and the interface matrices in addition to attention heads. 

The closest model weight geometry style work to ours is by \citet{projectionkernel2026}, who rank head-pair subspace affinities in GPT-2 small and visualize the top twenty, but they do not create the full communication map of a transformer, as they exclude MLPs, discard the operators' singular values (i.e., the channel ``gains''), and compare pooled score distributions to a random-subspace reference rather than testing individual pairs.

Assembling validated primitives into a map is, at whole-model scale, also a statistics problem. Screening all of a model's candidate channels, for example $6.3 \times 10^{8}$ in GPT-2 small and $1.3 \times 10^{11}$ in Pythia-$6.9$B, is large-scale simultaneous inference, and statistics developed the required toolbox for exactly this setting in genomic screening. We apply empirical null distributions fitted to the data's own geometry \citep{efron2004large} and the false-discovery-rate control (FDR) procedure of \citet{benjamini1995controlling}. This statistical toolbox has not previously been applied to transformer circuit screening. 

\section{The communication map: the coupling coefficient and the census}
\label{sec:map}

The full-transformer communication map assigns every ordered, causally eligible writer--reader pair a single score, the \textit{coupling coefficient}, and the census of those scores forms the basis of the map. 
Derivations are in Appendix~\ref{app:math}. Exact computational methods are detailed in Appendix~\ref{app:pipeline}.

\paragraph{The coupling coefficient ($C$).}
Generalizing the composition score of \citet{elhage2021mathematical}
from head pairs to arbitrary writer--reader matrices in a transformer, we define the
\emph{coupling coefficient} of a writer with write matrix $W$  and a reader with read matrix $R$  as
\begin{equation}
C \;=\; \frac{\lVert R\,W \rVert_F}{\lVert R \rVert_F\, \lVert W \rVert_F}
\qquad\in[0,1].
\label{eq:statistic}
\end{equation}
The reader and writer can be any component that writes to or reads from the residual stream. For example, $W$ equals $W_{OV,w}$ for a head, and $w_m$ for a neuron; $R$ equals $W_{QK,r}$ for K-composition, $W_{QK,r}^{\top}$ for Q-composition, $W_{OV,r}$ for V-composition, and $r_m^{\top}$ for a neuron. The token embedding matrix $W_E$ and, when it exists, the positional embedding matrix $W_{pos}$, are also writers, whereas the unembedding matrix $W_U$ is the final reader\footnote{From here on, we call the three matrices ($W_E$, $W_U$, and $W_{pos}$ if it exists) \textit{interface matrices}, as they handle the model's token inputs and final token outputs.}. For head--head pairs, $C$ is exactly the composition score of \citet{elhage2021mathematical}. The
generalization lets one formula cover every connection class of the map, 
from the $d \times d$ head circuits $W_{QK}$ and $W_{OV}$ down to rank-one neurons, and every feasible type of connection in-between.

\paragraph{Geometric interpretation.}
Let $p_k$ be the fraction of the reader's total squared weight
$\lVert R \rVert_F^{2}$ carried by its $k$-th principal input
direction (its $k$-th squared singular value over their sum),
$q_\ell$ the same for the writer's output directions, and
$\theta_{k\ell}$ the angle between the two directions. Then
Proposition~\ref{prop:factor} (Appendix~\ref{app:statistic}) says:
\begin{equation}
C^2 \;=\; \sum_{k,\ell} p_k\, q_\ell\, \cos^2\theta_{k\ell}.
\label{eq:cosform}
\end{equation}
To use a radio analogy, $\cos^2\theta_{k\ell}$
asks whether the two stations are tuned to the same frequency, whereas
$q_\ell$ measures how strongly the sender transmits on that frequency, and
$p_k$ measures how much the receiver amplifies the signal. All three factors must be large for the pair to carry weight, and any factor near
zero shuts it down, since a receiver perfectly tuned to a channel
but with the volume knob turned to zero hears nothing. 
Since the direction vectors on each side are orthonormal (Appendix~\ref{app:math}), the double sum in Eq.~\ref{eq:cosform} counts every transmit--receive band pair exactly once. Strongly coupled pairs place their weight where the cosines are high, and weakly coupled pairs place it where the cosines are low. 

\paragraph{Properties of the coupling coefficient.}
Three properties underwrite $C$ (proofs in Appendix~\ref{app:statistic}).
\begin{enumerate}
\item \emph{Rotation invariance.} $C$ depends on the pair only through the
  Grams $G = R^{\top} R$ and $H = W W^{\top}$
  (Proposition~\ref{prop:gram}, Corollary~\ref{cor:invariance}).
  Thus, transformations that change a head's internal bookkeeping
  do not change its effect on the residual stream, as rotating the
  $d_{\mathrm{head}}$-dimensional head space leaves both
  Gram matrices unchanged.
\item \emph{Scale.} $C \in [0, 1]$, and for rank-one pairs it collapses
  to $|\cos|$, so neuron wires and head pairs are read on a common
  scale (Corollary~\ref{cor:bounds}).
\item \emph{Chance level.} Under a Haar rotation of either side,
  $\mathbb{E}[C^2] = 1/d$ exactly, for any singular values, giving
  one baseline for all classes of connections at once (Proposition~\ref{prop:chance}). For rank-1 reader-writer pairs (i.e., neuron-neuron connections), $C^2$ follows a $\mathrm{Beta}\big(\tfrac12, \tfrac{d-1}{2}\big)$ distribution (Lemma~\ref{lem:beta}).
\end{enumerate}

\paragraph{Computing the map.} We compute the map
across seven LLMs of varying scales in three model families (see summary in Table~\ref{tab:scalecensus}): GPT-2, GPT-Neo and Pythia.
Each pair's $C^{2}$ is standardized against its rotation null
distribution, whose mean and variance are closed-form
(Appendix~\ref{app:nulls}). We use the cyclic property of the trace to compute $C$ efficiently, collapsing every class to the much smaller $d_{\mathrm{head}}$-wide products instead of naively multiplying two $d \times d$ matrices for each of the billions of candidate pairs (see Appendix~\ref{app:pipeline} for details). This makes mapping whole transformers feasible: the full map costs only $15$ seconds for GPT-2 small and $11$ minutes for Pythia-6.9B on one consumer GPU.

\begin{table}[t]
\centering
\caption{\textbf{Candidate-edge summary of 7 mapped models}.
 Counts are causally eligible
ordered pairs, and \textit{wires} are defined as neuron$\to$neuron pairs with
$|\cos| \ge 0.5$. The rotary Pythia models lack the 4 classes where the positional embedding matrix is the writer. (ifc = interface matrices).}
\label{tab:scalecensus}
\footnotesize
\setlength{\tabcolsep}{2pt}
\begin{tabular}{lrrrrrrr}
\toprule
Model & head--head & ifc$\leftrightarrow$head &
head$\leftrightarrow$neuron & ifc$\leftrightarrow$neuron &
neuron--neuron & Total & Wires \\
\midrule
GPT-2 & $28{,}512$ & $1{,}008$ & $1.02\times10^{7}$ & $110{,}592$ & $6.23\times10^{8}$ & $6.33\times10^{8}$ & $2{,}066$ \\
GPT-2-medium & $211{,}968$ & $2{,}688$ & $7.39\times10^{7}$ & $294{,}912$ & $4.63\times10^{9}$ & $4.70\times10^{9}$ & $3{,}124$ \\
GPT-2-large & $756{,}000$ & $5{,}040$ & $2.62\times10^{8}$ & $552{,}960$ & $1.65\times10^{10}$ & $1.68\times10^{10}$ & $6{,}001$ \\
GPT-Neo-125M & $28{,}512$ & $1{,}008$ & $1.02\times10^{7}$ & $110{,}592$ & $6.23\times10^{8}$ & $6.33\times10^{8}$ & $376$ \\
Pythia-160m & $28{,}512$ & $576$ & $1.02\times10^{7}$ & $73{,}728$ & $6.23\times10^{8}$ & $6.33\times10^{8}$ & $725$ \\
Pythia-2.8B & $1{,}523{,}712$ & $4{,}096$ & $6.61\times10^{8}$ & $655{,}360$ & $5.20\times10^{10}$ & $5.27\times10^{10}$ & $2{,}389$ \\
Pythia-6.9B & $1{,}523{,}712$ & $4{,}096$ & $1.06\times10^{9}$ & $1{,}048{,}576$ & $1.33\times10^{11}$ & $1.34\times10^{11}$ & $1{,}837$ \\
\bottomrule
\end{tabular}
\end{table}

\begin{table}[h]
\centering
\caption{\textbf{Head-to-head pairs against the theoretical null
distribution}, for every mapped model. Share of head pairs
significantly above ($z \ge 2$) and below ($z \le -2$) chance
orientation, per composition channel, with the median $z$ of each group in
parentheses. Here
$z = \big(C^{2} - 1/d\big)/\sqrt{\operatorname{Var}[C^{2}]}$, and
both moments are closed-form (Appendix~\ref{app:nulls}).}
\label{tab:theorycensus}
\footnotesize
\begin{tabular}{lccc}
\toprule
& K-composition & Q-composition & V-composition \\
Model & above / below & above / below & above / below \\
\midrule
GPT-2 & $55\%\,(+16)$ / $34\%\,(-8)$ &
  $61\%\,(+18)$ / $27\%\,(-7)$ &
  $25\%\,(+15)$ / $65\%\,(-10)$ \\
GPT-2-medium & $54\%\,(+12)$ / $28\%\,(-7)$ &
  $62\%\,(+15)$ / $20\%\,(-6)$ &
  $25\%\,(+13)$ / $58\%\,(-9)$ \\
GPT-2-large & $63\%\,(+13)$ / $23\%\,(-6)$ &
  $70\%\,(+14)$ / $15\%\,(-5)$ &
  $32\%\,(+12)$ / $51\%\,(-7)$ \\
GPT-Neo-125M & $40\%\,(+8)$ / $39\%\,(-6)$ &
  $48\%\,(+9)$ / $33\%\,(-7)$ &
  $24\%\,(+8)$ / $57\%\,(-8)$ \\
Pythia-160m & $50\%\,(+7)$ / $27\%\,(-7)$ &
  $65\%\,(+9)$ / $16\%\,(-5)$ &
  $21\%\,(+9)$ / $63\%\,(-10)$ \\
Pythia-2.8B & $55\%\,(+9)$ / $14\%\,(-4)$ &
  $52\%\,(+8)$ / $12\%\,(-3)$ &
  $27\%\,(+9)$ / $50\%\,(-7)$ \\
Pythia-6.9B & $84\%\,(+14)$ / $4\%\,(-4)$ &
  $50\%\,(+7)$ / $10\%\,(-3)$ &
  $31\%\,(+11)$ / $50\%\,(-7)$ \\
\bottomrule
\end{tabular}
\end{table}

Table~\ref{tab:theorycensus} reveals several interesting patterns. Most K- and Q-compositions have $C^2$ values that are either significantly above ($z \ge 2$) or below ($z \le -2$) the chance level, suggesting these writer-reader head pairs are either \textit{super-coupled} or actively \textit{avoidant} of each other by communicating in orthogonal subspaces. 
For instance, $89\%$ of GPT-2 small's head pairs in the K-composition deviate beyond two standard deviations from chance level, with $55\%$ \textit{super-coupled} and $34\%$ \textit{avoidant}. Q-composition follows a similar divide. The same pattern replicates across all seven mapped models.

The V-composition, in contrast, is more avoidant ($65\%$ in GPT-2 small) than super-coupled ($25\%$).  Why V-composition holds more avoidant pairs than super-coupled ones is an open question for future research. \citet{elhage2021mathematical} observed that V-composition is rare in their toy-scale attention-only models. Our evidence is consistent with that of \citet{elhage2021mathematical} as we find active suppression in the V-composition channel at every scale we map, from GPT-2 small to Pythia-$6.9$B. 
%

\paragraph{Neuron wires.}
The mapping census extends to the $6.2\times10^{8}$ neuron$\to$neuron
pairs of GPT-2 and the $1.3\times10^{11}$ of Pythia-$6.9$B. The result indicates that neuron-to-neuron pairings have couplings far exceeding what the random directional alignment would suggest. Under the null distribution of $C=|\cos|$, the expected number of pairs with $|\cos| \ge 0.2$ is $14$, and the expected number of pairs $|\cos| \ge 0.23$ is essentially zero (Corollary~\ref{cor:ceiling}). In reality, GPT-2 small holds $140{,}927$ pairs whose $|\cos| \ge 0.2$, and $78{,}861$ whose $|\cos| \ge 0.23$. So neurons do directly talk to each other.  Similar patterns persist at every model scale tested (Table~\ref{tab:scalecensus}; Figure~\ref{fig:nncos}, Appendix~\ref{app:nulls}).

\section{Application 1: head-to-head communities}
\label{sec:scan}
\label{sec:recovery}

The first application asks what the strongest head-to-head
couplings look like as a graph and whether known circuits can emerge from this graph.

\paragraph{Edge selection.}
We select, within each stratum of head-to-head pairs (Appendix~\ref{app:selection}), the pairs coupled far
beyond the typical degree of coupling for that stratum, and use those pairs as the edges of the head graph. The theoretical null cannot serve as the selection standard, since $47\%$ of GPT-2's head pairs exceed it at $z \ge 2$ (Table~\ref{tab:theorycensus}). Instead, we ask which pairs are exceptionally strong against each stratum's \emph{empirical null distribution} \citep{efron2004large}, using the robust z-score:
\begin{equation}
z_e \;=\; \frac{C_e - \operatorname{med}_s}{1.4826 \cdot
\operatorname{MAD}_s},
\qquad
\operatorname{MAD}_s = \operatorname{med}_{e' \in s}
\big| C_{e'} - \operatorname{med}_s \big|,
\label{eq:empnull}
\end{equation}
where in each stratum $s$, the numerator of edge $e$ is $C_e$ minus the stratum's median $\operatorname{med}_s$, and the robust standard deviation is $1.4826$ times the median absolute deviation (MAD) \citep{hampel1974influence,rousseeuw1993alternatives}. Within each connection class, Benjamini--Hochberg control at $q = 0.05$ \citep{benjamini1995controlling} then sets the class's selection threshold adaptively, so that the edges the map draws carry a false discovery rate (FDR) of at most $5\%$ per class. On GPT-2 the resulting head-class thresholds fall between $z = 2.8$ and $z = 3.2$, the $99.75$th to $99.94$th percentiles of the fitted empirical null distributions.

\begin{figure}[t]
\centering
\includegraphics[width=\linewidth]{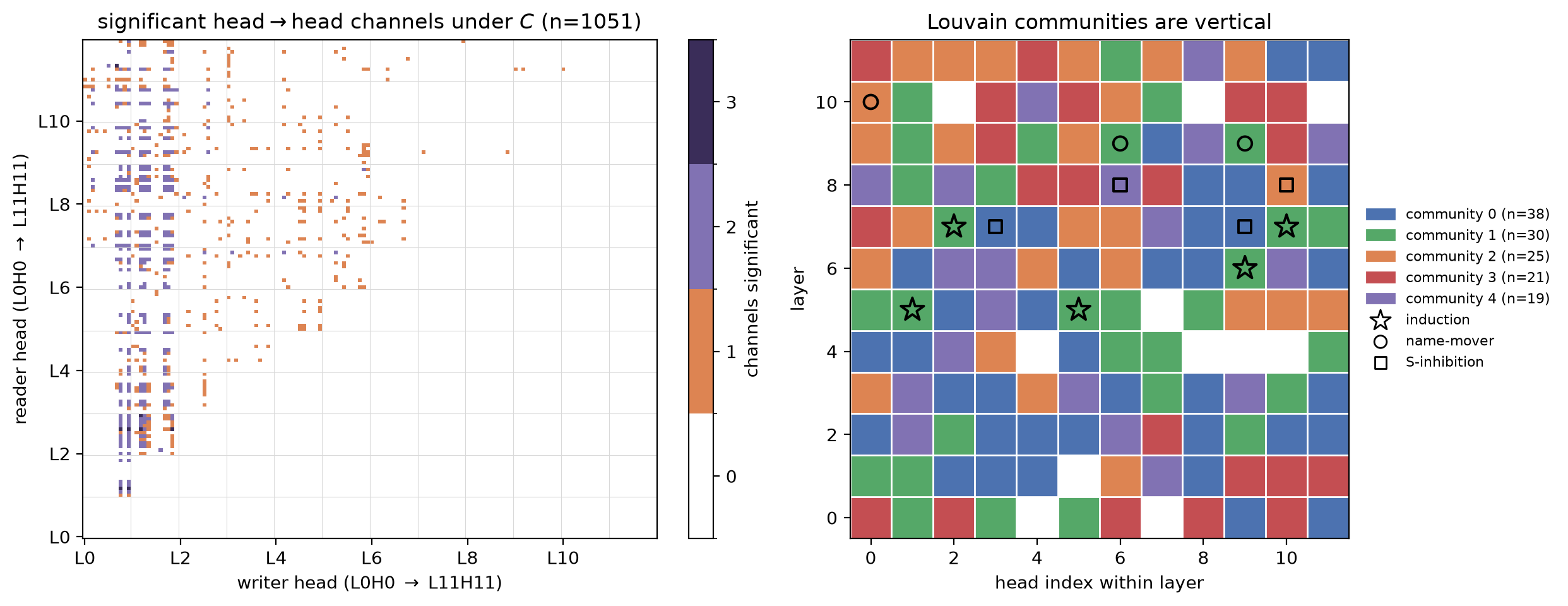}
\caption{\textbf{The head-to-head subgraph of the communication map (GPT-2).}
\emph{Left:} adjacency of the $1{,}051$ selected head$\to$head
channel edges under $C$, one cell per ordered pair, darker where more
of the three channels are selected, layer 0 at the lower left.
\emph{Right:} the seed-$0$ Louvain partition of the same graph
(assignment archived), one cell per head, with the five induction
heads ($\star$), the IOI name-movers ($\bullet$), and the
S-inhibition heads ($\blacksquare$) marked. White cells are heads
outside the giant component.}
\label{fig:mapoverview}
\end{figure}

\paragraph{Head graph.}  Figure~\ref{fig:mapoverview} (Left) shows the resulting head-to-head communication graph for GPT-2's $144$ attention
heads. Each ordered
head pair carries three separately tested channels (K-, Q-, and
V-composition), and $1{,}051$ of the $28{,}512$ candidate edges
are selected at FDR $q = 0.05$ under $C$.
Layers 0--1 mostly hold \textit{broadcasting} heads that write to the residual stream, while middle and later layers mostly hold
\textit{receiver} heads that read from the earlier layers with little broadcasting of their own.

\paragraph{Head communities.} We then conduct a Louvain partition of the head-to-head subgraph
\citep{blondel2008fast}, which yields the five communities shown in
Figure~\ref{fig:mapoverview} (right). The partition is stable across
ten Louvain seeds. Three patterns emerge: 
\textit{First}, the communities are vertical. They run across the depth of the model rather than along it, and no community spans fewer than eight of GPT-2's twelve layers.
\textit{Second}, communication across heads appears long-range. 
The median writer-to-reader separation is five layers against four for the eligible pairs, and the rate of wired pairs roughly doubles with distance
(Table~\ref{tab:span}), consistent with the predictions of \citet{elhage2021mathematical}.
\textit{Third}, known circuits concentrate in certain communities. Figure~\ref{fig:mapoverview} (Right) shows that the green community with 30 heads holds all five induction heads and two of the three IOI name-mover heads in ten of ten seeds. The induction and IOI heads are identified behaviorally per prior literature's protocols \citep{olsson2022induction,wang2022interpretability}. The previous-token head L4H11 is the top-ranked K-composition writer into every induction head.

\paragraph{The community ablation experiment.} We conduct an experiment to investigate the effect of ablating the induction-concentrated community (e.g., the green community in Figure~\ref{fig:mapoverview}) on a model's in-context copying capability. We measure induction capability by the \emph{induction gain}, which is the drop in next-token loss between a block of random tokens and an immediately following copy of it. Full results are presented in Table~\ref{tab:communityrep} and Table~\ref{tab:decomp} in Appendix~\ref{app:communityrep}. 

On GPT-2 small, freezing the full thirty-head community's outputs at corpus means destroys $93.8\%$ of the model's induction gain, against a median of $2.5\%$ for random control sets matched in size and per-layer head counts, suggesting that the ablated community of heads is indeed responsible for virtually all induction in the model. Furthermore, the five behaviorally identified induction heads collectively account for only $60.8\%$ of the induction gain, so the community's remaining members, individually near-inert, carry redundant machinery that single-head identification misses. The ablation results replicate across six models as shown in Table~\ref{tab:communityrep}, reaching $97.1\%$ against a $1.0\%$ control median at Pythia-2.8B.

\section{Application 2: induction-critical subspace}
\label{sec:interventions}
\label{sec:finding}
\label{sec:unplug}
\label{sec:universality1}

In this second application, we apply the communication mapping method to
the problem of identifying induction-critical dimensions in the
residual stream of a transformer. Instead of focusing on
individual heads or neurons, here we demonstrate how the
same mapping method can be used to identify
communication channels affecting the entire transformer at once. 
It also connects this research to the existing
literature on important directions in the residual stream,
activation PCA, and outlier dimensions (see Baselines below).

\paragraph{Reader--writer PCA.}
We develop a method, called \emph{reader--writer PCA} (RW-PCA), to identify an induction-critical subspace in
the residual stream of a transformer from the communication map
directly. We exclude the MLP neurons as we focus primarily on attention heads in this analysis. RW-PCA differs from activation PCA \citep{mu2018allbutthetop,ethayarajh2019contextual} in that the eigendecomposition is applied to a transformer's reader and writer matrices rather than sampled activations the model generates in forward passes. In other words, RW-PCA is based on the inherent \textit{geometry of a transformer's matrices}.

\paragraph{The RW-PCA method.} An attention head touches the stream
through its four factor matrices
$W_Q, W_K, W_V, W_O \in \mathbb{R}^{d \times d_{\mathrm{head}}}$,
each reading or writing a $d_{\mathrm{head}}$-dimensional slice of
it (Appendix~\ref{app:conventions}), and the interface matrices
touch it directly. For any one of these matrices $X$, scored
against a unit stream direction $v$ playing the writer,
Eq.~\ref{eq:statistic} squares to
$C^{2}(X, v) = \lVert X^{\top} v \rVert^{2} / \lVert X
\rVert_F^{2} = v^{\top} G_X\, v / \operatorname{tr}(G_X)$ with
$G_X := X X^{\top}$, a quadratic form in the unit-trace Gram. The stream directions
that matter globally are those maximizing this coupling summed
over every attention-head factor and interface matrix,
$\sum_X C^{2}(X, v) = v^{\top} S v$, which defines the
\emph{pooled coupling matrix}
\begin{equation}
S \;=\; \sum_{X} \frac{G_X}{\operatorname{tr}(G_X)}.
\label{eq:pooled}
\end{equation}
The trace normalization gives each factor one vote
(Appendix~\ref{app:bandselect}). Maximizing
$v^{\top} S v$ over orthonormal directions is the Rayleigh
quotient problem, so the maximizers are the eigenvectors of $S$. 
These eigenvectors can be regarded as the global communication \emph{bands} that most heads share, and the ten leading bands are the candidate pool for later analysis. 

\paragraph{Selecting the most induction-critical bands.} Induction is \textit{positional}, as induction circuits generally match the current token to its previous occurrence and copy what followed \citep{elhage2021mathematical,olsson2022induction}, so whatever performs induction must read and write positional signals. Therefore, in order to identify the top-2 induction-critical bands from the top-10 eigendirections (i.e., ``global bands'') identified in the step above, we select the two most position-specific eigendirections, which have the highest ratio of positional coupling to token coupling defined as

\begin{equation}
\operatorname{PosRatio}(v_i) =
\frac{C^2_{pos}(v_i)}
{C^2_{tok}(v_i)}.
\label{eq:posratio}
\end{equation}

The numerator and denominator are band $v_i$'s squared \emph{positional coupling} with the positional matrix $\Pi$ and its squared \emph{token coupling} with the token embedding matrix $W_E$,
\begin{equation}
C^{2}_{\mathrm{pos}}(v_i) \;=\;
\frac{\lVert \Pi^{\top} v_i \rVert^{2}}
{\lVert \Pi \rVert_F^{2}},
\qquad
C^{2}_{\mathrm{tok}}(v_i) \;=\;
\frac{\lVert W_E^{\top} v_i \rVert^{2}}
{\lVert W_E \rVert_F^{2}}.
\label{eq:shares}
\end{equation}
We divide $C^{2}_{\mathrm{pos}}$ by $C^{2}_{\mathrm{tok}}$ in Eq.~\ref{eq:posratio} because the deletion is meant to be selective. Deleting content-carrying directions degrades prediction indiscriminately \citep{kovaleva2021bert,dettmers2022gpt3}, so the ratio reflects the stated goal, selecting the most positional and least content-carrying directions. For GPT-2, the \emph{positional matrix} $\Pi$ is just the positional embedding matrix $W_{pos}$. But for models utilizing RoPE, which have no positional embedding matrix at all, we estimate $\Pi$ by sampling the residual stream with 32 sequences of 128 random tokens each. The full implementation details are in Appendix~\ref{app:pos}.

\paragraph{Induction-critical subspace deletion.}
Once the two-dimensional induction-critical subspace is identified, we delete it via a
projection in the following three steps: (1) Stack the two chosen directions as the columns of an
orthonormal matrix $V \in \mathbb{R}^{d\times 2}$. (2) Remove this subspace $V$ from the residual stream at the input of every layer and before the final unembedding, by replacing the residual stream's activation $x$ by $x - VV^{\top}x$. (3) Then measure the induction gain of the transformer without this two-dimensional subspace, as well as the rise in natural-text loss on the
Pile.

\begin{table}[t]
\centering
\caption{\textbf{The deletion protocol and subspace comparison at
six scales.} Condition cells are percentages of induction gain destroyed / Pile
$\Delta$NLL in nats. The last two columns are the two principal
cosines between the RW-PCA subspace and each baseline's subspace ($1$ =
identical, $0$ = orthogonal). The control deletes a random $2$-dimensional subspace (median of five draws). GPT-Neo is excluded because random deletion already removes $60$--$98.1\%$ of its induction gain.}
\label{tab:cs2rep}
\footnotesize
\setlength{\tabcolsep}{2.5pt}
\begin{tabular}{lrrrrcc}
\toprule
& & & & & \multicolumn{2}{c}{Cosines: ours vs} \\
\cmidrule(lr){6-7}
Model & RW-PCA (ours) & Activation PCA & Outlier dims & Random & Act.\ PCA & Outliers \\
\midrule
GPT-2 & $92.3\%$ / $+1.6$ & $87.4\%$ / $+7.0$ & $57.5\%$ / $+4.2$ & $0.2\%$ / $+0.1$ & $.29,\,.07$ & $.08,\,.00$ \\
GPT-2-medium & $82.5\%$ / $+3.4$ & $88.5\%$ / $+7.7$ & $71.2\%$ / $+5.1$ & $1.3\%$ / $+0.1$ & $.25,\,.01$ & $.16,\,.00$ \\
GPT-2-large & $81.8\%$ / $+1.3$ & $86.6\%$ / $+3.4$ & $-19.0\%$ / $+1.8$ & $0.1\%$ / $+0.0$ & $.50,\,.05$ & $.05,\,.01$ \\
Pythia-160m (rotary) & $95.8\%$ / $+8.1$ & $42.0\%$ / $+3.8$ & $35.4\%$ / $+2.4$ & $3.0\%$ / $+0.2$ & $.66,\,.12$ & $.59,\,.00$ \\
Pythia-2.8B (rotary) & $85.9\%$ / $+9.7$ & $78.0\%$ / $+7.2$ & $83.5\%$ / $+6.9$ & $0.2\%$ / $+0.0$ & $.26,\,.02$ & $.21,\,.01$ \\
Pythia-6.9B (rotary) & $89.5\%$ / $+8.6$ & $84.5\%$ / $+8.2$ & $36.5\%$ / $+8.4$ & $0.2\%$ / $+0.0$ & $.83,\,.26$ & $.84,\,.34$ \\
\bottomrule
\end{tabular}
\end{table}

\paragraph{Baselines.}
As baselines, we report two other key methods from the prior literature for identifying
important directions in the residual stream. \emph{Activation PCA} takes the top two principal
components of the mid-layer residual stream on natural text, and
removing such components is an established postprocessing operation
\citep{mu2018allbutthetop,ethayarajh2019contextual}. \emph{Outlier
dimensions} are the two stream coordinates with the largest
activation RMS, the rogue dimensions of the outlier literature,
documented for GPT-2 as dimensions $447$ and $138$
\citep{timkey2021all,kovaleva2021bert,dettmers2022gpt3}. Subspace deletions are run on these two baselines and results reported in Table~\ref{tab:cs2rep}.

\paragraph{Distinct directions.}
Table~\ref{tab:cs2rep} (last two columns) reports the two principal cosines
\citep{bjorck1973numerical} between the RW-PCA plane and each of
these planes on all six models. The two directions identified by
our method are correlated with both baselines, to varying degrees, and also
distinct from both. While the two baselines nearly coincide 
with each other, with first cosines of $0.93$--$0.998$ on five of six
models, RW-PCA's second principal cosine against either baseline
never exceeds $0.35$ and is usually below $0.1$. On every model, our 
method identifies at least one principal direction not captured by either baseline, confirming that the proposed mapping method can identify
induction-critical dimensions distinct from those of the prior literature.

\paragraph{Deletion outcomes.}
As Table~\ref{tab:cs2rep} illustrates, deleting our two induction-critical directions consistently destroys $82$--$96\%$ of the induction gain on every model. 
In comparison, the baseline methods are either inconsistent or less effective at destroying induction. 
In the Pythia-6.9B case, projecting out only $2$ of its $4{,}096$ dimensions, merely 0.05\% of the residual stream's width, abolishes 89.5\% of its induction capability. In comparison, the activation PCA achieves 84.5\% and the rogue dimensions method only 36.5\%. 
The contrast is starker in Pythia-160m, where RW-PCA destroys 95.8\% of induction, but activation PCA destroys only 42\% and the outlier method destroys only 35.4\%.
Random subspace deletions, as a control, show zero or near-zero effect across all six models. Furthermore, a deletion-dose curve in the
released results confirms that setting the subspace dimension number to 2 is a minimal and effective choice at the scales we tested.

In summary, the map identifies a two-dimensional subspace, distinct
from those of the prior literature, that carries a transformer's
in-context copying capability at every scale tested.

\section{Conclusion}
\label{sec:conclusion}

We built a communication map of an entire transformer, 
covering every head, neuron, and eligible channel, from the geometry of its 
weight matrices. We then demonstrated the map in two applications. In the first application, we constructed a head-to-head communication graph and discovered that heads group into vertical communities with induction and IOI circuits concentrated in one community. In the second application, we identified a few global communication bands in the residual stream that most of the transformer's heads share, and two bands in particular form an induction-critical subspace. Our findings replicate across six models, spanning from GPT-2 small to Pythia-6.9B.

This research has several limitations. First, the map charts potential connectivity from weights alone. It shows which pairs are geometrically able to communicate, not how much they actually communicate on any given input, and input-dependent routing through attention patterns is invisible to it. Our ablation results confirm that map-drawn structures carry function, but a general account of when potential connectivity is realized remains open. Second, our applications validate the map against a single capability, in-context copying, and focus on head-to-head structure. The neuron wires and the remaining communities the map exposes await investigation of their own. Third, mapping and intervention experiments are limited to the sub-10B-parameter model scale. Fourth, the map's nodes are heads and neurons, not learned features. A feature-level
map, utilizing sparse-autoencoder or transcoder units
\citep{bricken2023monosemanticity,cunningham2024sparse,dunefsky2024transcoders}, is a natural extension in future research.

\subsubsection*{Reproducibility statement}
All models are public checkpoints accessed via TransformerLens
\citep{nanda2022transformerlens}. Natural-text corpora are fixed Pile
samples \citep{gao2020pile}. The full pipeline is released at \codeurl\ with fixed
seeds, and every number regenerates from the released code and map
tables. Appendix~B lists decisions, corpora, and compute. All experiments
run on a single NVIDIA RTX 5090 ($32$\,GB): the full GPT-2 map builds
in $15$\,s, the Pythia-6.9B map in $11$ minutes, and the ablation
experiments run in minutes per model. The 6.9B rows need the
$32$\,GB for fp32 forwards; every other number reproduces on smaller
cards.

\subsection*{AI use statement}
In this work, we used generative AI tools for assistance in the writing of proofs, providing critical ingredients for proving mathematical claims, providing feedback on research methodology, and efficient implementation of methods (algorithms).
We have not used generative AI tools for formulating mathematical claims, designing research methodology or experiments, assisting with translation, or interpreting results.
Furthermore, generating synthetic data sets, developing theoretical models or conceptual frameworks, proposing or refining hypotheses, cleaning and reformatting dataset, and supporting qualitative and thematic data analysis are not applicable to this work.
Additionally, we used generative AI tools for coding assistance, creating figures in the paper, identifying and summarizing the relevant literature, copyediting and formatting. We have reviewed all AI-assisted work. We take responsibility for the final content of this work, including text, claims or artifacts produced with the aid of generative AI.

\bibliography{references}
\bibliographystyle{iclr2027_conference}

\appendix
\clearpage

\section{Mathematical appendix}
\label{app:math}
The conventions, derivations, and computational identities behind the
main text, self-contained.

\subsection{Conventions}
\label{app:conventions}
Stream vectors $x \in \mathbb{R}^{d}$ are columns, and all maps act by
left multiplication. All four per-head weight matrices are stored
uniformly as factors
$W_Q, W_K, W_V, W_O \in \mathbb{R}^{d \times d_{\mathrm{head}}}$.
Reading projects into head space through a transpose
($W_X^{\top} x \in \mathbb{R}^{d_{\mathrm{head}}}$ for
$X \in \{Q, K, V\}$), and writing returns through left multiplication
($W_O z \in \mathbb{R}^{d}$). Consequently $W_{QK} = W_Q W_K^{\top}$ and
$W_{OV} = W_O W_V^{\top}$ are $d \times d$ with rank at most
$d_{\mathrm{head}}$. The letters are mnemonic and mark a type
distinction, with $W_{QK}$ a bilinear \emph{form} (two reading bases, analyzed
by SVD, never applied as an operator), while $W_{OV}$ is a stream-to-stream
\emph{map} (analyzed by eigendecomposition, which is what makes copying
scores possible). Common implementations, including TransformerLens
\citep{nanda2022transformerlens}, store activations as row vectors
with weights multiplying on the right, where $W_Q, W_K, W_V$ already
have the factor shape and $W_O$ must be transposed.
\citet{elhage2021mathematical} write the same operator as
$W_{OV} = W_O W_V$ with map-shaped factors, the identical object. Our
loading code performs the conversion once and asserts the resulting
$W_{QK}$ and $W_{OV}$ against TransformerLens's own factored QK and OV
circuits, so a silently transposed map is caught at load time. Table~\ref{tab:shapes} collects the
shapes, ranks, and roles of every per-component matrix in one place.

\begin{table}[h]
\centering
\caption{\textbf{Shapes, ranks, and roles of the per-component matrices}
(column convention). Ranks are generic, meaning that with trained weights all
$d_{\mathrm{head}}$ singular values are numerically nonzero, though
\emph{effective} ranks are often far smaller.}
\label{tab:shapes}
\footnotesize
\setlength{\tabcolsep}{1.5pt}
\begin{tabular}{lllll}
\toprule
Matrix & Shape (general) & Shape (GPT-2) & Rank & Role \\
\midrule
$W_Q$ & $d \times d_{\mathrm{head}}$ & $768 \times 64$ & $\le d_{\mathrm{head}}$ & query factor: $q = W_Q^{\top} x$ \\
$W_K$ & $d \times d_{\mathrm{head}}$ & $768 \times 64$ & $\le d_{\mathrm{head}}$ & key factor: $k = W_K^{\top} x$ \\
$W_V$ & $d \times d_{\mathrm{head}}$ & $768 \times 64$ & $\le d_{\mathrm{head}}$ & value factor: $v = W_V^{\top} x$ \\
$W_O$ & $d \times d_{\mathrm{head}}$ & $768 \times 64$ & $\le d_{\mathrm{head}}$ & output factor: head space $\to$ stream, $\Delta = W_O z$ \\
$W_{QK} = W_Q W_K^{\top}$ & $d \times d$ & $768 \times 768$ & $\le d_{\mathrm{head}}$ & QK circuit: bilinear form on position pairs \\
$W_{OV} = W_O W_V^{\top}$ & $d \times d$ & $768 \times 768$ & $\le d_{\mathrm{head}}$ & OV circuit: stream $\to$ stream operator \\
$r_m$ & $d \times 1$ & $768 \times 1$ & $1$ & neuron read direction: $a_m = r_m^{\top} x$ \\
$w_m$ & $d \times 1$ & $768 \times 1$ & $1$ & neuron write direction: $\Delta_m = g(a_m)\, w_m$ \\
\bottomrule
\end{tabular}
\end{table}

\subsection{The attention block, derived to the factored form}
\label{app:attnblock}
Stacking positions as columns of $X \in \mathbb{R}^{d \times n}$, the
head is $\Delta X = W_{OV,h}\, X A^{\top}$, where the attention matrix $A$ acts only
on the position axis and is recomputed on every forward pass, while $W_{QK,h}$
(through the logits) and $W_{OV,h}$ act only on the feature axis and are
frozen weights. The map charts the feature axis
(Section~\ref{sec:background}).

\paragraph{The four-term logit split behind K- and Q-composition.}
The reader's attention logit
$\ell_{ij} = x_i^{\top} W_{QK,r}\, x_j / \sqrt{d_{\mathrm{head}}}$
is bilinear in the two stream
vectors $x_i$ and $x_j$, and an upstream writer contributes to
both, since it writes at every position. Let
$W_{OV,w} u_i$ denote the writer's output at position $i$, with $u_i$ its
attention-averaged input there, and split $x_i = \bar{x}_i + W_{OV,w} u_i$
into the writer's write and the rest of the stream (same for $x_j$).
Bilinearity gives, exactly,
\begin{equation}
\begin{split}
\sqrt{d_{\mathrm{head}}}\;\ell_{ij} \;=\;{}&
  \bar{x}_i^{\top} W_{QK,r}\, \bar{x}_j
  \;+\; \bar{x}_i^{\top} W_{QK,r}\, (W_{OV,w} u_j) \\
  &+ (W_{OV,w} u_i)^{\top} W_{QK,r}\, \bar{x}_j
  \;+\; (W_{OV,w} u_i)^{\top} W_{QK,r}\, (W_{OV,w} u_j).
\end{split}
\label{eq:foursplit}
\end{equation}
The first term does not involve the writer. The second is the writer's
entry through the key side, governed by the operator
$W_{QK,r} W_{OV,w}$ (K-composition). The third is the writer's entry
through the query side, governed by
$W_{QK,r}^{\top} W_{OV,w}$ (Q-composition). The two cross terms are not exclusive
cases but coexisting channels of one exact expansion. The fourth term
routes the writer into both sides at once, through the operator
$W_{OV,w}^{\top} W_{QK,r} W_{OV,w}$.

The both-sides term adds no channel of its own. Its operator
contains the writer's matrix twice, so its normalized score
carries $\lVert W_{OV,w} \rVert_F^{2}$ in the denominator. The
Frobenius norm is submultiplicative,
$\lVert XY \rVert_F \le \lVert X \rVert_F\, \lVert Y \rVert_F$
\citep{horn1991topics}, and applying it to the grouping
$W_{OV,w}^{\top}(W_{QK,r} W_{OV,w})$ gives
\begin{equation}
\frac{\lVert W_{OV,w}^{\top} W_{QK,r}\, W_{OV,w} \rVert_F}
{\lVert W_{QK,r} \rVert_F\, \lVert W_{OV,w} \rVert_F^{2}}
\le
\frac{\lVert W_{OV,w} \rVert_F\, \lVert W_{QK,r} W_{OV,w} \rVert_F}
{\lVert W_{QK,r} \rVert_F\, \lVert W_{OV,w} \rVert_F^{2}}
=
\frac{\lVert W_{QK,r} W_{OV,w} \rVert_F}
{\lVert W_{QK,r} \rVert_F\, \lVert W_{OV,w} \rVert_F}
= C_K,
\end{equation}
after one factor of $\lVert W_{OV,w} \rVert_F$ cancels, so the
both-sides score is bounded by exactly the K-composition
coupling. The other grouping,
$(W_{OV,w}^{\top} W_{QK,r})\, W_{OV,w}$, gives the mirror bound by
the Q-composition coupling $C_Q$. A vanishing
one-sided channel therefore forces the both-sides term to zero,
so the map scores two QK edge classes per head pair rather than
three. Numerically, the bound holds with a wide
margin. Across all $9{,}504$ head pairs of GPT-2 the normalized
both-sides score never exceeds its bound, and its ratio to the
smaller of the two one-sided couplings, $\min(C_K, C_Q)$, is at
most $0.35$ (script released).

\subsection{The coupling coefficient}
\label{app:statistic}

\begin{proposition}[the weighted-cosine identity]
\label{prop:factor}
Let $R = U \Sigma V^{\top}$ and $W = A S B^{\top}$ be the two
SVDs, with singular values $\sigma_k$ and $s_\ell$, and let
$\cos\theta_{k\ell} := v_k^{\top} a_\ell$ pair the reader's input
directions with the writer's output directions. With the gain
shares $p_k := \sigma_k^2 / \sum_{k'} \sigma_{k'}^2$ and
$q_\ell := s_\ell^2 / \sum_{\ell'} s_{\ell'}^2$,
\begin{equation}
C^2 \;=\; \sum_{k,\ell} p_k\, q_\ell\, \cos^2\theta_{k\ell}.
\nonumber
\end{equation}
\end{proposition}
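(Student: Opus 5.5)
The plan is to expand $\lVert RW\rVert_F^2$ through the two SVDs and use unitary invariance of the Frobenius norm to strip off the outer orthogonal factors. Take thin SVDs $R = U\Sigma V^{\top}$ and $W = A S B^{\top}$, where $U$, $V$, $A$, $B$ have orthonormal columns and $\Sigma$, $S$ are diagonal with positive entries $\sigma_k$ and $s_\ell$. Zero singular values can be dropped because they contribute nothing to either side. Then $RW = U\Sigma (V^{\top}A) S B^{\top}$. Since $U$ has orthonormal columns, $\lVert UX\rVert_F = \lVert X\rVert_F$, and the same holds for right multiplication by $B^{\top}$, because $\lVert XB^{\top}\rVert_F^2 = \operatorname{tr}(X B^{\top} B X^{\top}) = \lVert X\rVert_F^2$. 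This gives
\[
\lVert RW\rVert_F^2 = \lVert \Sigma M S\rVert_F^2,\qquad M := V^{\top}A,\quad M_{k\ell} = v_k^{\top}a_\ell = \cos\theta_{k\ell}.
\]

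Next I expand entrywise. The diagonal scalings act on rows and columns, so $(\Sigma M S)_{k\ell} = \sigma_k s_\ell \cos\theta_{k\ell}$, and therefore
\[
\lVert RW\rVert_F^2 = \sum_{k,\ell}\sigma_k^2 s_\ell^2\cos^2\theta_{k\ell}.
\]
For the normalization I use $\lVert R\rVert_F^2 = \sum_k\sigma_k^2$ and $\lVert W\rVert_F^2 = \sum_\ell s_\ell^2$, which again follow from unitary invariance. Dividing gives $C^2 = \sum_{k,\ell} p_k q_\ell \cos^2\theta_{k\ell}$, which is the identity claimed in Proposition~\ref{prop:factor}.

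The main obstacle is bookkeeping rather than any real difficulty. First, I have to confirm that the reader's \emph{input} directions are the right singular vectors $V$ of $R$, since $R$ acts on the stream from the left in $RW$. Second, I have to confirm that the writer's \emph{output} directions are the left singular vectors $A$ of $W$, so that the inner factor really is $V^{\top}A$. Third, the step of dropping $B^{\top}$ needs orthonormal \emph{columns} of $B$, which a thin SVD provides.

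Two side remarks complete the picture. The rank-one case $R = r^{\top}$, $W = w$ falls out immediately as $C = |\cos|$. Since $V$ and $A$ have orthonormal columns, $\sum_\ell \cos^2\theta_{k\ell} \le 1$, so $C^2$ is a weighted average of numbers in $[0,1]$. This justifies the interpretation in Eq.~\ref{eq:cosform}.
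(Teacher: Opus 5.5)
Your proposal is correct and follows essentially the same route as the paper's proof: factor $RW = U(\Sigma V^{\top} A S)B^{\top}$, strip the outer factors with orthonormal columns using Frobenius-norm invariance, read off the entries $\sigma_k s_\ell \cos\theta_{k\ell}$, and divide by $\sum_k \sigma_k^2 \sum_\ell s_\ell^2$. Your explicit use of thin SVDs (so that $B^{\top}B = I$) and of dropping zero singular values are minor clarifications that the paper leaves implicit.
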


\begin{proof}
The composed operator factorizes as
$RW = U\, \Sigma\, (V^{\top} A)\, S\, B^{\top} =
U\, (\Sigma \Theta S)\, B^{\top}$ with the cosine table
$\Theta := V^{\top} A$, $\Theta_{k\ell} = \cos\theta_{k\ell}$.
The outer factors have orthonormal columns and preserve the
Frobenius norm, since
$\operatorname{tr}(B X^{\top} U^{\top} U X B^{\top}) =
\operatorname{tr}(X^{\top} X)$ by cyclicity, so
$\lVert RW \rVert_F^{2} = \lVert \Sigma \Theta S \rVert_F^{2}$.
The left diagonal factor $\Sigma$ scales the rows of $\Theta$,
and the right diagonal factor $S$ scales the columns of
$\Theta$. Thus
$(\Sigma \Theta S)_{k\ell} = \sigma_k \cos\theta_{k\ell}\,
s_\ell$, and summing squared entries gives
$\lVert RW \rVert_F^{2} = \sum_{k,\ell} \sigma_k^2 s_\ell^2
\cos^2\theta_{k\ell}$. The same norm preservation applied to each
SVD alone gives the denominators,
$\lVert R \rVert_F^{2} = \lVert \Sigma \rVert_F^{2} =
\sum_k \sigma_k^2$ and
$\lVert W \rVert_F^{2} = \lVert S \rVert_F^{2} =
\sum_\ell s_\ell^2$, and dividing the
squared numerator by both gives Eq.~\ref{eq:cosform}.
\end{proof}

\begin{proposition}[Gram form]
\label{prop:gram}
\begin{equation}
C^{2}
\;=\; \frac{\lVert RW \rVert_F^{2}}
{\lVert R \rVert_F^{2}\, \lVert W \rVert_F^{2}}
\;=\; \frac{\operatorname{tr}(GH)}
{\operatorname{tr}(G)\, \operatorname{tr}(H)},
\qquad G := R^{\top} R,
\quad H := W W^{\top}.
\nonumber
\end{equation}
\end{proposition}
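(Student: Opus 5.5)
The plan is to compute each Frobenius norm as a trace and then use cyclicity of the trace to move the reader and writer factors into Gram form. No earlier result is needed beyond $\lVert X \rVert_F^{2} = \operatorname{tr}(X^{\top} X) = \operatorname{tr}(X X^{\top})$, which holds for any real matrix. The first equality in the statement is just the definition of $C$ in Eq.~\ref{eq:statistic}, squared. All the content is in the second equality.

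For the numerator, I would write
\begin{equation}
\lVert RW \rVert_F^{2} = \operatorname{tr}\big((RW)^{\top}(RW)\big) = \operatorname{tr}(W^{\top} R^{\top} R\, W) = \operatorname{tr}(W^{\top} G\, W).
\nonumber
\end{equation}
Cycling the leading $W^{\top}$ to the end gives $\operatorname{tr}(G\, W W^{\top}) = \operatorname{tr}(GH)$. Both products are square: $W^{\top} G W$ is $n_W \times n_W$ and $G W W^{\top}$ is $d \times d$, so cyclicity applies.

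For the denominators, the same identity gives $\lVert R \rVert_F^{2} = \operatorname{tr}(R^{\top} R) = \operatorname{tr}(G)$ and $\lVert W \rVert_F^{2} = \operatorname{tr}(W W^{\top}) = \operatorname{tr}(H)$. Dividing the numerator by these yields the claimed ratio.

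The only care point is a shape check. $R$ is $m \times d$, for example $d \times d$ for $W_{QK,r}$ or $1 \times d$ for $r_m^{\top}$. $W$ is $d \times n$, for example $d \times d$ for $W_{OV,w}$ or $d \times 1$ for $w_m$. Hence $G = R^{\top} R$ and $H = W W^{\top}$ are both $d \times d$ stream-space matrices, and $GH$ is well defined for every connection class. I would state this explicitly, since it is what lets one formula cover all classes, including the rank-one cases.

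As a cross-check, the result can also be obtained from Proposition~\ref{prop:factor}. There $G = V \Sigma^{2} V^{\top}$ and $H = A S^{2} A^{\top}$, so
\begin{equation}
\operatorname{tr}(GH) = \sum_{k,\ell} \sigma_k^{2} s_\ell^{2} (v_k^{\top} a_\ell)^{2},
\nonumber
\end{equation}
which reproduces the numerator found in that proof. There is no real obstacle in either route.
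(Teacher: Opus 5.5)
Your proposal is correct and matches the paper's proof: both write $\lVert RW \rVert_F^{2} = \operatorname{tr}(W^{\top} R^{\top} R W)$, cycle the trace to get $\operatorname{tr}(GH)$, and identify the normalizers as $\operatorname{tr} G = \lVert R \rVert_F^{2}$ and $\operatorname{tr} H = \lVert W \rVert_F^{2}$. Your shape check and the cross-check through Proposition~\ref{prop:factor} are sound additions, though the proof does not need them.
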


\begin{proof}
By cyclicity of the trace,
\begin{equation}
\lVert RW \rVert_F^2
= \operatorname{tr}(W^{\top} R^{\top} R W)
= \operatorname{tr}\big(\underbrace{R^{\top} R}_{G}\,
\underbrace{W W^{\top}}_{H}\big),
\nonumber
\end{equation}
and the normalizers are
$\operatorname{tr} G = \operatorname{tr}(R^{\top}R) =
\lVert R \rVert_F^{2}$ and likewise
$\operatorname{tr} H = \lVert W \rVert_F^{2}$.
\end{proof}

The numerator is in fact the Frobenius inner product of the two
Grams, since
$\operatorname{tr}(GH) = \operatorname{tr}(G^{\top}H)
= \langle G, H \rangle_F$ by the symmetry of $G$. The
denominator is a choice. Normalizing by the traces, rather than
by Cauchy--Schwarz with
$\lVert G \rVert_F\, \lVert H \rVert_F$, is what yields the
weighted-cosine reading of Eq.~\ref{eq:cosform} and the
universal chance level of Appendix~\ref{app:nulls}. Under the
trace normalization a perfectly matched isotropic pair,
$R = W = I_d$, scores $C^2 = 1/d$, exactly the chance level of
Proposition~\ref{prop:chance}, while the Cauchy--Schwarz
normalization would award the same pair its maximal score. The
trace verdict is the intended one, because a pair coupled equally
over every direction of the stream exhibits none of the selective
alignment that defines an edge (Section~\ref{sec:background}).

\begin{corollary}[invariance]
\label{cor:invariance}
$C$ depends on the pair only through $(G, H)$. It satisfies
$C(\alpha R, \beta W) = C(R, W)$ for any nonzero scalars and
$C(UR,\, WV) = C(R, W)$ for any orthogonal $U$, $V$.
\end{corollary}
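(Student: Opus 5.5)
The plan is to read all three claims off Proposition~\ref{prop:gram}, which writes $C^2 = \operatorname{tr}(GH)/(\operatorname{tr}G\,\operatorname{tr}H)$ with $G = R^{\top}R$ and $H = WW^{\top}$. The first claim is then immediate: the right-hand side is a function of $(G,H)$ alone. Since $C \ge 0$ by definition, $C$ equals the nonnegative square root of this expression, so it too depends on the pair only through the Grams.

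For scale invariance, substitute $\alpha R$ and $\beta W$. This gives $G \mapsto \alpha^2 G$ and $H \mapsto \beta^2 H$. The numerator $\operatorname{tr}(GH)$ acquires the factor $\alpha^2\beta^2$, and so does the denominator $\operatorname{tr}(G)\operatorname{tr}(H)$. The factors cancel, which needs $\alpha,\beta \neq 0$ so that nothing degenerates, and the usual standing assumption $R, W \neq 0$ so that $C$ is defined at all.

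For rotation invariance, substitute $UR$ and $WV$ with $U, V$ orthogonal. Then $(UR)^{\top}(UR) = R^{\top}U^{\top}UR = G$ and $(WV)(WV)^{\top} = WVV^{\top}W^{\top} = H$. Both Grams are unchanged, and the claim follows from the first part.

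The only point needing care is the shapes. $U$ acts on the reader's output space, for instance $\mathbb{R}^{d}$ for $W_{QK}$ or $\mathbb{R}$ for a neuron row $r_m^{\top}$, so it must be square orthogonal of that size. $V$ acts on the writer's input side. I will state that "orthogonal" means square orthogonal of the compatible dimension, since then $U^{\top}U = I$ and $VV^{\top} = I$ are exactly the identities used.

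These are precisely the transformations that leave the composed operator's role in the residual stream unchanged. Invariance under them is the rotational-invariance property cited in the main text, namely rotating the head's internal $d_{\mathrm{head}}$ coordinates through the factored forms. I do not expect a real obstacle; the argument is bookkeeping on Proposition~\ref{prop:gram}.
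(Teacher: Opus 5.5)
Your proposal is correct and follows the paper's own argument. Like the paper, you read everything off the Gram form of Proposition~\ref{prop:gram}, noting that $(UR)^{\top}(UR) = G$ and $(WV)(WV)^{\top} = H$ and that the scalars cancel in the ratio; your extra care about the shapes of $U$ and $V$ and the standing assumption $R, W \neq 0$ only makes explicit what the paper leaves implicit.
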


\begin{proof}
$(UR)^{\top}(UR) = G$ and $(WV)(WV)^{\top} = H$, and scalars
cancel in the ratio.
\end{proof}

These invariances are the appropriate blindness for a communication
statistic. $U$ rotates the reader's internal encoding after
reading, and $V$ rotates the writer's internal channels. Neither
is visible on the residual stream, so neither affects an edge
score. Only the listened-to directions ($G$) and the written-onto
directions ($H$) survive.

\begin{corollary}[range and extremes]
\label{cor:bounds}
$C \in [0, 1]$. Moreover: (i) $C = 0$ if and only if $RW = 0$,
equivalently $\mathrm{Row}(R) \perp \mathrm{Col}(W)$; (ii) $C = 1$
if and only if $R$ and $W$ both have rank one and the reader's
input direction coincides with the writer's output direction up to
sign; (iii) for rank-one pairs,
$C = \lvert\cos\angle(r_m, w_{m'})\rvert$.
\end{corollary}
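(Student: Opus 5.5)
The whole argument runs through the weighted-cosine identity of Proposition~\ref{prop:factor}, $C^2=\sum_{k,\ell}p_k q_\ell\cos^2\theta_{k\ell}$, with the sums taken over the nonzero singular values only, so that every $p_k>0$ and $q_\ell>0$ and $\sum_k p_k=\sum_\ell q_\ell=1$. (We assume $R,W\neq 0$ so $C$ is defined.)

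\emph{Range.} Nonnegativity is immediate. For the upper bound, fix $k$. Because the writer directions $a_\ell$ are orthonormal, Bessel's inequality gives $\sum_\ell\cos^2\theta_{k\ell}=\sum_\ell (v_k^\top a_\ell)^2\le\lVert v_k\rVert^2=1$. Since $q_\ell\le 1$,
\[
\sum_\ell q_\ell\cos^2\theta_{k\ell}\le\sum_\ell\cos^2\theta_{k\ell}\le 1 .
\]
Weighting by $p_k$ and summing then gives $C^2\le\sum_k p_k=1$. (Submultiplicativity $\lVert RW\rVert_F\le\lVert R\rVert_F\lVert W\rVert_F$ gives the same bound more quickly, but the cosine route also yields the equality conditions needed for (ii).)

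\emph{(i).} $C=0$ exactly when $\lVert RW\rVert_F=0$, i.e.\ $RW=0$. Now $RW=0$ holds iff $R$ annihilates $\mathrm{Col}(W)$, iff $\mathrm{Col}(W)\subseteq\ker R=\mathrm{Row}(R)^\perp$. Equivalently, since all $p_k,q_\ell>0$, every $\cos\theta_{k\ell}=0$, i.e.\ $\mathrm{span}\{v_k\}\perp\mathrm{span}\{a_\ell\}$.

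\emph{(ii).} This is the main step: we must trace equality through the chain above. Equality $C^2=1$ forces, for every $k$ (each has $p_k>0$), $\sum_\ell q_\ell\cos^2\theta_{k\ell}=1$. Because $\sum_\ell q_\ell\cos^2\theta_{k\ell}$ is at most $\max_\ell\cos^2\theta_{k\ell}\cdot\sum_\ell q_\ell\le 1$, it can equal $1$ only if $\cos^2\theta_{k\ell}=1$ on every $\ell$ with $q_\ell>0$, which here means every $\ell$. That is, each unit vector $v_k$ has $|v_k^\top a_\ell|=1$ for all $\ell$, so $v_k=\pm a_\ell$ for every $\ell$. If the writer had two orthonormal directions $a_1,a_2$, then $v_k$ would equal $\pm a_1$ and $\pm a_2$ simultaneously, which is impossible. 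Hence $W$ has rank one, and symmetrically (every $v_k$ equals $\pm a_1$ and the $v_k$ are orthonormal) $R$ has rank one, with $v_1=\pm a_1$. The converse is direct: for two rank-one matrices with $v_1=\pm a_1$ the double sum is the single term $1\cdot1\cdot1$.

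\emph{(iii).} For a reader row $r_m^\top$ and a writer column $w_{m'}$, compute $C=|r_m^\top w_{m'}|/(\lVert r_m\rVert\lVert w_{m'}\rVert)=|\cos\angle(r_m,w_{m'})|$. This also agrees with the single-term case of Proposition~\ref{prop:factor}.
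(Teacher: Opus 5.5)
Your proof is correct and follows the paper's route. Both arguments run everything through the weighted-cosine identity of Proposition~\ref{prop:factor}, read $C^2$ as a convex combination of the $\cos^2\theta_{k\ell}$, and get (ii) from the orthonormality contradiction that forces both sides to rank one. Two minor remarks, neither affecting correctness: the Bessel step is more than you need, since each $\cos^2\theta_{k\ell}\le 1$ already bounds the convex combination; and in (ii) the clean justification that every $\cos^2\theta_{k\ell}=1$ is $\sum_\ell q_\ell(1-\cos^2\theta_{k\ell})=0$ with nonnegative terms, not the max bound.
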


\begin{proof}
By Proposition~\ref{prop:factor},
$C^2 = \sum_{k,\ell} p_k\, q_\ell \cos^2\theta_{k\ell}$ is a
convex combination of the values $\cos^2\theta_{k\ell}$, since the
weights $p_k q_\ell$ are nonnegative and sum to one, so
$C \in [0, 1]$. Part (i) is immediate from the definition, because
$\lVert RW \rVert_F = 0$ if and only if $RW = 0$. For (ii), the
convex combination equals $1$ if and only if
$\cos^2\theta_{k\ell} = 1$ for every pair with $p_k q_\ell > 0$.
If $p$ had two nonzero entries $k \ne k'$, the directions $v_k$
and $v_{k'}$ would both coincide with the same unit vector
$a_\ell$ up to sign, contradicting their orthogonality. Hence $p$
and $q$ each have a single nonzero entry, both matrices have rank
one, and the two surviving directions satisfy
$\cos^2\theta = 1$, meaning they coincide up to sign. For (iii),
the SVD of a vector has a single singular direction, so the sum
has one term with $p_1 = q_1 = 1$.
\end{proof}

At $C = 0$ the reader receives nothing the writer transmits, and the
rank-one case is what puts neuron wires and head circuits on one
common scale.

\paragraph{Why the K-composition reader is $W_{QK,r}$, not $W_K$.}
Eq.~\ref{eq:cosform} also justifies the choice of reading matrix. The
key-side reading directions are the right singular vectors $v_k$ of
$W_{QK,r}$, and each is weighted by $\sigma_k$, which measures how strongly
the paired query direction $u_k$ listens. A key channel the query side
never matches has $\sigma_k \approx 0$ and contributes nothing to logits,
and Eq.~\ref{eq:statistic} counts it as nothing. A reader built from
$W_K$ alone would count it fully.

\subsection{Nulls}
\label{app:nulls}

The rotation null distribution of the main text is, in the
terminology of \citet{efron2004large}, the \emph{theoretical null
distribution}. Application 1's selection standard, the
\emph{empirical null distribution}, is documented with its
tooling in Appendix~\ref{app:communityrep}.

\begin{proposition}[universal chance level]
\label{prop:chance}
Under a Haar rotation of either side, $\mathbb{E}[C^2] = 1/d$
exactly, for any reader and writer singular values.
\end{proposition}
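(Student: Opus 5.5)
The plan is to work from the Gram form of Proposition~\ref{prop:gram}, $C^2 = \operatorname{tr}(GH)/(\operatorname{tr} G\,\operatorname{tr} H)$ with $G = R^{\top}R$ and $H = WW^{\top}$, both $d\times d$ and positive semidefinite. A Haar rotation of the writer side replaces $W$ by $QW$ with $Q$ Haar-distributed on $O(d)$, so $H$ becomes $QHQ^{\top}$; a rotation of the reader side replaces $R$ by $RQ^{\top}$, so $G$ becomes $QGQ^{\top}$. Either way, the denominator is deterministic, since trace is conjugation-invariant. It therefore suffices to compute $\mathbb{E}\operatorname{tr}(G\,QHQ^{\top})$, and the reader case reduces to the same computation by cyclicity of the trace.

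The key step is the identity $\mathbb{E}[QHQ^{\top}] = \frac{\operatorname{tr} H}{d} I_d$. I would prove it by invariance. Let $M := \mathbb{E}[QHQ^{\top}]$. Haar measure is left-invariant, so $PMP^{\top} = M$ for every orthogonal $P$, meaning $M$ commutes with all of $O(d)$. Taking $P$ to be reflections $I - 2e_ie_i^{\top}$ forces the off-diagonal entries to vanish, and taking $P$ to be permutation matrices forces the diagonal entries to be equal, so $M = cI$. Taking traces then gives $c\,d = \operatorname{tr} H$.

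Linearity of trace and expectation then gives $\mathbb{E}\operatorname{tr}(GQHQ^{\top}) = \operatorname{tr}(G)\operatorname{tr}(H)/d$, and dividing by the fixed denominator yields $\mathbb{E}[C^2] = 1/d$. Only traces enter this argument, so the result is independent of the singular values of either side, which is the "any singular values" clause. Equivalently, in the picture of Proposition~\ref{prop:factor}, each $\cos^2\theta_{k\ell}$ has mean $1/d$, and the weights $p_kq_\ell$ sum to one.

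No step is a real obstacle. The one point needing care is to confirm that a rotation "of either side" acts on the stream-facing side, that is, by left multiplication on $W$ and by right multiplication on $R$. This is the side that Corollary~\ref{cor:invariance} does \emph{not} leave invariant, so the randomization is nontrivial. I would also note explicitly that $G$ and $H$ are fixed and nonzero, so the ratio is well-defined.
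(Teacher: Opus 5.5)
Your proof is correct and follows the paper's argument: pass to the Gram form, use Haar invariance to get $\mathbb{E}[QHQ^{\top}] = (\operatorname{tr} H/d)\, I_d$, then apply linearity of trace and expectation with the deterministic denominator. The only differences are that you prove the averaging identity directly, via reflections and permutations, where the paper cites it as classical, and that you spell out the reader-side rotation explicitly through cyclicity of the trace.
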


\begin{proof}
Writing $\hat{H} = H/\operatorname{tr}(H)$, invariance of the Haar
measure forces $\mathbb{E}_Q[Q\hat{H}Q^{\top}] = I/d$, a classical
averaging identity (\citealp[Cor.~3.4]{collins2006integration};
\citealp[\S 2.1]{meckes2019random}),
so, by linearity of the trace and of expectation,
$\mathbb{E}[C^2] =
\operatorname{tr}(\hat{G}\,\mathbb{E}_Q[Q\hat{H}Q^{\top}]) = 1/d$.
\end{proof}

Proposition~\ref{prop:chance} pins down the mean of the null
distribution and nothing more. How $C^2$ scatters around $1/d$
from one rotation to the next still depends on the singular
values. A writer whose singular values are all equal has
$\hat{H} = I/d$, a
matrix that every rotation leaves unchanged, so $C^2$ equals
$1/d$ with no fluctuation at all. A rank-one pair is the opposite
extreme. All of the gain lies in one band on each side, the
statistic is the squared cosine between a fixed direction and a
uniformly random one, and in this case the whole null
distribution, not only its mean, has a classical closed form.

\begin{lemma}[{rank-one null law; classical, see \citealp[\S 2.1, Prop.~2.5]{meckes2019random}}]
\label{lem:beta}
For a fixed unit vector $u$ and $v$ uniform on the unit sphere in
$\mathbb{R}^d$, the squared cosine $(u^{\top} v)^2$ follows
$\mathrm{Beta}\big(\tfrac12, \tfrac{d-1}{2}\big)$, and under this
distribution the standard deviation of the cosine is
$1/\sqrt{d} = 0.036$ at $d = 768$.
\end{lemma}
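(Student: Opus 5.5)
We plan to prove the distributional claim through the Gaussian representation of the uniform measure on the sphere, and then read the standard deviation off the Beta moments. By rotation invariance of the uniform distribution (the same Haar invariance used in Proposition~\ref{prop:chance}), the law of $(u^{\top}v)^2$ does not depend on the fixed unit vector $u$. We may therefore take $u = e_1$, so the statistic becomes $v_1^2$.

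\textbf{Step 1: Gaussian representation.} Represent $v = g/\lVert g\rVert$ with $g \sim \mathcal{N}(0, I_d)$; this vector is uniform on the sphere because the standard Gaussian is rotation invariant. Then
\[
v_1^2 = \frac{g_1^2}{g_1^2 + \sum_{i\ge 2} g_i^2} = \frac{X}{X+Y},
\]
where $X = g_1^2 \sim \chi^2_1 = \mathrm{Gamma}(\tfrac12, 2)$ and $Y \sim \chi^2_{d-1} = \mathrm{Gamma}(\tfrac{d-1}{2}, 2)$, and $X$ and $Y$ are independent.

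\textbf{Step 2: Beta law.} Invoke the classical Gamma--Beta relation: if $X \sim \mathrm{Gamma}(a,\theta)$ and $Y \sim \mathrm{Gamma}(b,\theta)$ are independent with a common scale, then $X/(X+Y) \sim \mathrm{Beta}(a,b)$. We can verify this with the change of variables $(X,Y)\mapsto (T, S) = (X/(X+Y),\, X+Y)$, whose Jacobian is $S$. The joint density then factors as
\[
t^{a-1}(1-t)^{b-1} \cdot s^{a+b-1}e^{-s/\theta},
\]
so $T$ has the Beta density and is independent of $S$. Setting $a = \tfrac12$ and $b = \tfrac{d-1}{2}$ gives the stated law.

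\textbf{Step 3: Standard deviation.} The cosine $c = u^{\top}v$ is symmetric about zero under the sign flip $v \mapsto -v$, so $\mathbb{E}[c]=0$. Hence $\operatorname{sd}(c)^2 = \mathbb{E}[c^2]$, which is the Beta mean $a/(a+b) = \tfrac{1/2}{d/2} = 1/d$. This also follows directly from $\sum_i \mathbb{E}[v_i^2]=1$ and exchangeability of the coordinates, and it matches Proposition~\ref{prop:chance} in the rank-one case. At $d=768$ this gives $\operatorname{sd}(c) = 1/\sqrt{768} \approx 0.0361$.

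The only step needing care is the Jacobian computation in Step 2 together with justifying the Gaussian representation of the uniform law. Both are classical, so if space is short we would simply cite them, as the statement itself does.
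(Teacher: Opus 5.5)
Your proof is correct. It differs from the paper only in that the paper gives no derivation at all: it states the lemma as a classical fact with a pointer to Meckes (2019, \S 2.1, Prop.~2.5). Its only further support is the Monte Carlo check in Appendix~\ref{app:nulls}, where a Kolmogorov--Smirnov test against $\mathrm{Beta}(\tfrac12,\tfrac{d-1}{2})$ passes at $d=3,8,768$.

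Your route is the standard derivation of the cited fact. You normalize a standard Gaussian, write $v_1^2=X/(X+Y)$ with independent $\chi^2_1$ and $\chi^2_{d-1}$ variables, and apply the Gamma--Beta change of variables with Jacobian $s$. This makes the appendix self-contained at the cost of a few lines. The paper's citation buys brevity, and its simulation checks only the numerics, not the law itself. You implicitly need $d\ge 2$ so that $b=\tfrac{d-1}{2}>0$, which is harmless here.

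Your Step~3 is worth keeping even if you cite the distributional part. The Beta law concerns $c^2$, and $\mathbb{E}[c^2]=1/d$ becomes a variance only because $\mathbb{E}[c]=0$ by the sign symmetry $v\mapsto -v$, a step the paper leaves implicit. Making it explicit also pins down that $1/\sqrt{d}$ is the spread of the signed cosine. The map's neuron score $C=|\cos|$ has the smaller standard deviation $\sqrt{1/d-(\mathbb{E}|c|)^2}\approx 0.60/\sqrt{d}$, so the distinction matters wherever the lemma's $1/\sqrt{d}$ is used as a spread for $C$.
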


\begin{corollary}[chance-law exceedance counts]
\label{cor:ceiling}
By Lemma~\ref{lem:beta} the two-sided tail is
$P(|\cos| > t) = I_{1-t^2}\!\big(\tfrac{d-1}{2}, \tfrac12\big)$,
the regularized incomplete beta function. Among the
$N = 622{,}854{,}144$ neuron-pair draws from the null distribution,
the expected count $N \cdot P(|\cos| > t)$ of pairs beyond
$t = 0.2$ is therefore $13.8$, beyond $0.23$ it is $0.07$, and
beyond the wire threshold $0.5$ it is $4\times10^{-41}$. The
maximum of the $N$ draws consequently has median $0.217$ and 95th
percentile $0.231$, solving $N \cdot P(|\cos| > t) = \ln 2$ and
$-\ln 0.95$ respectively.
\end{corollary}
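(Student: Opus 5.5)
The plan is to take the closed-form tail of Lemma~\ref{lem:beta} and turn it into exceedance counts, then into quantiles of the maximum through a Poisson approximation.

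\emph{Tail formula.} If $X=(u^{\top}v)^2\sim\mathrm{Beta}(\tfrac12,\tfrac{d-1}{2})$, then $P(|\cos|>t)=P(X>t^2)=1-I_{t^2}(\tfrac12,\tfrac{d-1}{2})$. The reflection identity $1-I_x(a,b)=I_{1-x}(b,a)$ gives $I_{1-t^2}(\tfrac{d-1}{2},\tfrac12)$, which is the stated form. For a cross-check independent of special-function software, I will also write the tail in elementary form. The density of the cosine $c=u^{\top}v$ is proportional to $(1-c^2)^{(d-3)/2}$, so
\[
P(|c|>t)=\frac{2\int_t^1(1-c^2)^{(d-3)/2}\,dc}{B(\tfrac12,\tfrac{d-1}{2})}.
\]
The Laplace-type asymptotic
\[
P(|c|>t)\approx \frac{2(1-t^2)^{(d-1)/2}}{t\,(d-1)\,B(\tfrac12,\tfrac{d-1}{2})}
\]
is accurate for $t\sqrt d\gg1$, and $B(\tfrac12,\tfrac{d-1}{2})\approx\sqrt{2\pi/d}$.

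\emph{Counts.} The null draws are identically distributed, so linearity of expectation gives an expected count of $N\cdot P(|\cos|>t)$ with no independence assumption. I first confirm $N=622{,}854{,}144$ as the number of causally eligible ordered neuron pairs in GPT-2 small: with $3072$ neurons per layer, the writer must sit in a strictly earlier layer than the reader, giving $3072^2\cdot\binom{12}{2}=3072^2\cdot 66$. I then evaluate the tail at $d=768$ and $t\in\{0.2,0.23,0.5\}$. As a sanity check, $t=0.2$ is about $5.5$ standard deviations, since $\sqrt{768}\cdot0.2\approx5.54$. There the tail is about $2\times10^{-8}$, so $N\cdot P\approx 14$, consistent with $13.8$. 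At $t=0.5$ we have $(0.75)^{383.5}\approx10^{-48}$, which times the prefactors and $N$ gives $\sim10^{-41}$. I will present these as numerical evaluations of $I_{1-t^2}$, since the exact decimals are computational rather than derivable by hand.

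\emph{Maximum.} This is the main obstacle, because the $N$ cosines are not independent: they share read and write directions across pairs. I will read the statement as concerning $N$ draws from the null, treated as independent, matching the phrase ``neuron-pair draws.'' Then
\[
P(\max\le t)=(1-p_t)^N\approx e^{-Np_t},
\]
which is accurate because $p_t$ is tiny. Setting $e^{-Np_t}=\tfrac12$ gives the median condition $Np_t=\ln 2$. Setting $e^{-Np_t}=0.95$ gives the 95th-percentile condition $Np_t=-\ln 0.95$. I then solve each equation for $t$ by bisection on the monotone tail function, yielding about $0.217$ and $0.231$.

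I would add a remark that under weak dependence the Poisson clumping heuristic still supports these values approximately, but that the statement only asserts the idealized null computation.
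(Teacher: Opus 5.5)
Your proposal is correct and takes essentially the same route as the paper. The paper gives no separate proof; its justification is the computation built into the statement: the Beta tail from Lemma~\ref{lem:beta} rewritten through $1-I_x(a,b)=I_{1-x}(b,a)$, multiplied by $N=3072^2\cdot 66$, and the maximum's quantiles read off from the independent-draw approximation $(1-p_t)^N\approx e^{-Np_t}$. Your elementary tail asymptotic and your explicit caveat that the $N$ cosines are not actually independent are useful additions the paper leaves implicit, and the stated values $13.8$, $0.07$, $4\times10^{-41}$, $0.217$ and $0.231$ all check out numerically.
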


The wire tail persists at every scale. At
$6.9$B the largest observed $|\cos|$ is $0.78$ against a $0.016$
chance scale. On natural text the wired pairs co-activate at median
$|\mathrm{corr}| = 0.31$ against $0.03$ for random pairs, so they
are not dormant. Table~\ref{tab:nnexceed} sets the observed
exceedance counts against the chance law at each threshold.

\begin{figure}[t]
\centering
\includegraphics[width=0.92\linewidth]{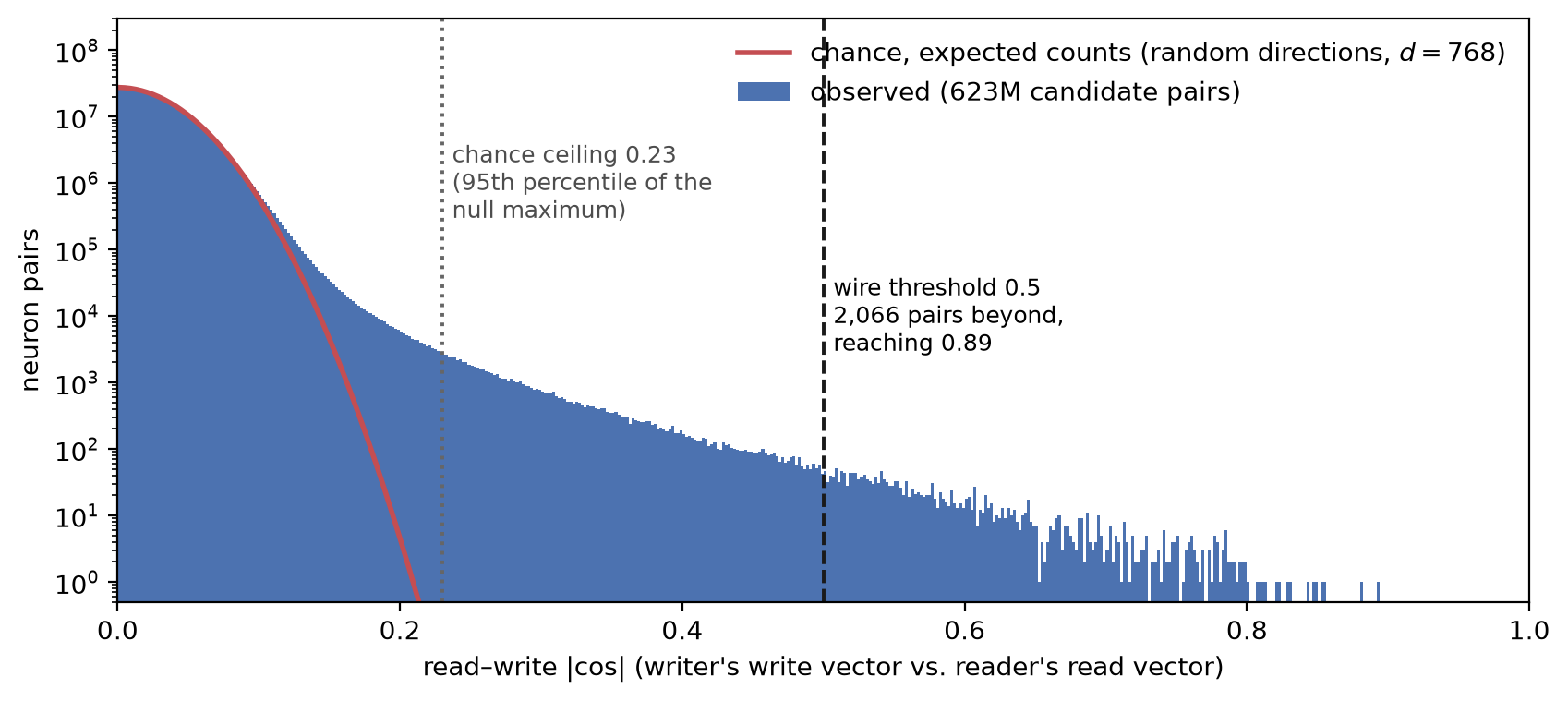}
\caption{\textbf{The neuron$\to$neuron coupling distribution.}
Read--write $|\cos|$ over all $622{,}854{,}144$ candidate neuron
pairs (blue, log count scale) against the exact chance density for
random directions in $d=768$ (red), scaled to the same number of
pairs. Chance and observation agree through the bulk. The dotted
line marks $0.23$, beyond which the chance law expects $0.07$
pairs. The observed distribution continues to $0.89$. The $2{,}066$ pairs beyond
$0.5$ are the \emph{wires}.}
\label{fig:nncos}
\end{figure}

\begin{figure}[t]
\centering
\includegraphics[width=0.92\linewidth]{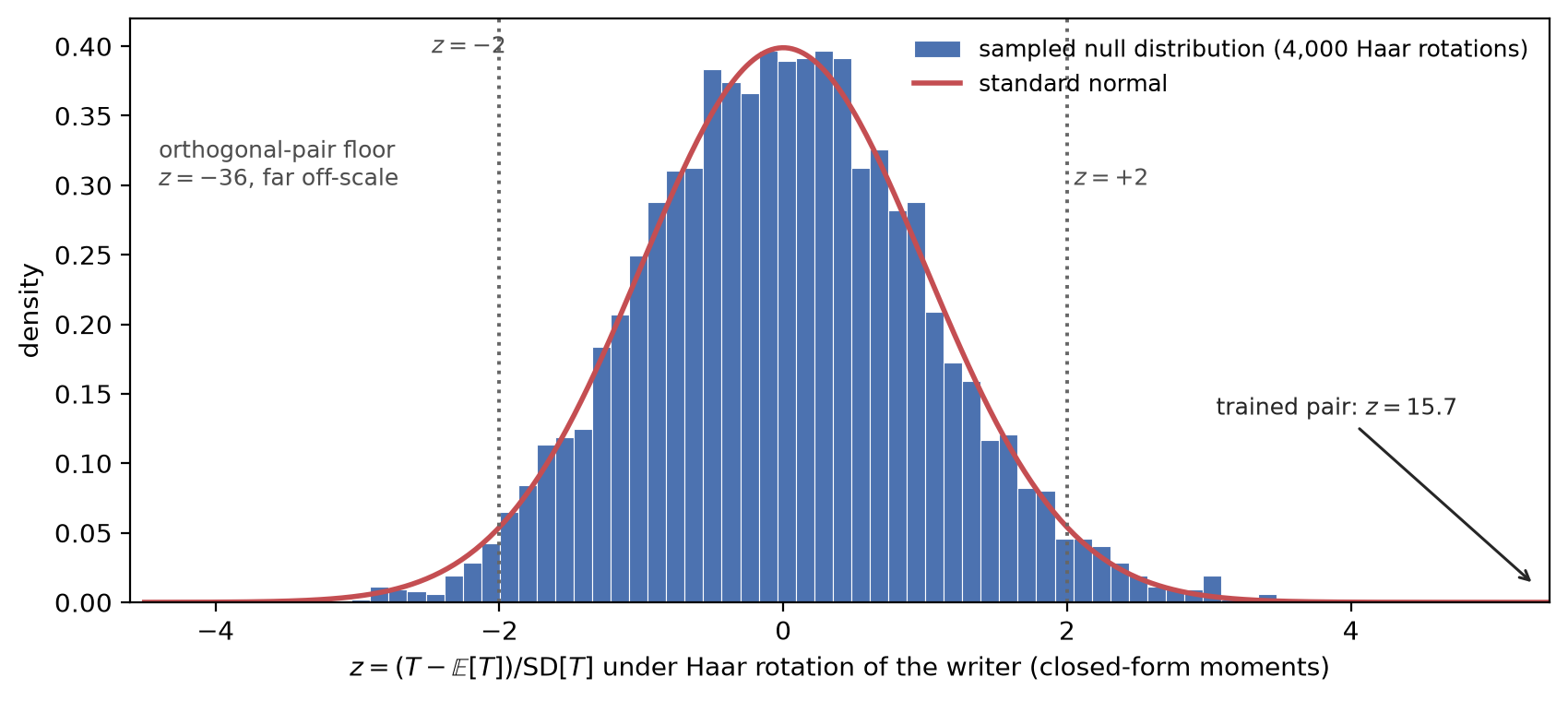}
\caption{The sampled rotation null distribution of one head pair
(writer L4H7 $\to$ reader L8H10 of GPT-2, $4{,}000$ Haar rotations),
standardized by the closed-form moments of
Equation~\ref{eq:zcensus}. The rank-$64$ null is a near-symmetric
bump with mass in both tails (sampled mean $0.02$, SD $1.00$, skew
$+0.20$), so the below-chance regime of Table~\ref{tab:theorycensus} is reachable. The
perfectly orthogonal pair sits at $z=-36$, far off-scale, in contrast
with the rank-one law of Figure~\ref{fig:nncos}, whose floor of $-0.7$ makes
$z\le-2$ unreachable for neuron pairs. The trained pair scores
$z=15.7$, far outside its own null distribution.}
\label{fig:nullshape}
\end{figure}

\begin{table}[h]
\centering
\caption{\textbf{Exceedance counts over the neuron$\to$neuron
chance law (GPT-2).} For each threshold $t$: the single-pair tail
$P(|\cos| > t)$ under the exact chance law (Lemma~\ref{lem:beta}),
the resulting expected number of pairs beyond $t$ among all
$N = 622{,}854{,}144$ candidate pairs, and the observed count in
the trained weights.}
\label{tab:nnexceed}
\footnotesize
\begin{tabular}{cccr}
\toprule
$t$ & $P(|\cos| > t)$ & Expected $N \cdot P$ & Observed \\
\midrule
$0.15$ & $3.0\times10^{-5}$  & $1.8\times10^{4}$   & $511{,}605$ \\
$0.20$ & $2.2\times10^{-8}$  & $13.8$              & $140{,}927$ \\
$0.23$ & $1.1\times10^{-10}$ & $0.07$              & $78{,}861$ \\
$0.25$ & $2.0\times10^{-12}$ & $1.3\times10^{-3}$  & $55{,}991$ \\
$0.30$ & $1.9\times10^{-17}$ & $1.2\times10^{-8}$  & $26{,}100$ \\
$0.50$ & $7.0\times10^{-50}$ & $4.4\times10^{-41}$ & $2{,}066$ \\
\bottomrule
\end{tabular}
\end{table}

\paragraph{Closed-form moments and the census.}
Lemma~\ref{lem:beta} covers the rank-one classes. A subspace pair
of rank above one has no closed-form null law, because the
rotation-null distribution of $C$ depends on the singular values
of both sides. Its first two moments are nevertheless closed-form,
and the z-scores of the census need nothing more. Fix one head
pair as trained, with reader Gram $G$ and writer Gram $H$. Under
the rotation null the squared statistic is
$C^{2} = T/(\operatorname{tr}G\,\operatorname{tr}H)$ with
$T = \operatorname{tr}(G\,QHQ^{\top})$ and $Q$ Haar-random. The
mean $\mathbb{E}[C^{2}] = 1/d$ is Proposition~\ref{prop:chance}.
The variance of $T$ is the degree-two case of the same Weingarten
calculus \citep[Cor.~3.4]{collins2006integration}:
\begin{equation}
\operatorname{Var}[T] \;=\; \frac{2}{(d-1)(d+2)}
\Big(\operatorname{tr} G^{2} -
\tfrac{(\operatorname{tr} G)^{2}}{d}\Big)
\Big(\operatorname{tr} H^{2} -
\tfrac{(\operatorname{tr} H)^{2}}{d}\Big),
\label{eq:theovar}
\end{equation}
and dividing by the constant
$(\operatorname{tr}G\,\operatorname{tr}H)^{2}$ turns it into
$\operatorname{Var}[C^{2}]$. Thus, we compute the standardized
score (z-score) for $C^{2}$ of each writer--reader pair as
\begin{equation}
z \;=\; \frac{C^{2} - 1/d}
{\sqrt{\operatorname{Var}[T]}\,\big/\,
(\operatorname{tr}G\,\operatorname{tr}H)}
\;=\;
\frac{T - \operatorname{tr}G\,\operatorname{tr}H/d}
{\sqrt{\operatorname{Var}[T]}},
\label{eq:zcensus}
\end{equation}
with both moments exact, so no approximation enters. The second
form is the one the released code computes. All four invariants
collapse to inner-Gram products,
$\operatorname{tr} G = \operatorname{tr}(g_Q g_K)$ and
$\operatorname{tr} G^{2} = \operatorname{tr}((g_Q g_K)^{2})$ for a
K-composition reader and likewise with $g_O g_V$ for V-composition
and for writers, so every pair's z-score costs a few
$d_{\mathrm{head}}$-wide products at any model size. On GPT-2 the
closed-form census reproduces a $500$-rotation Monte Carlo census
to a tenth of a percentage point on every fraction (verification
below). Table~\ref{tab:theorycensus} (Section~\ref{sec:map}) is
the resulting census. Pairs suppressed below even the empirical
selection threshold in magnitude concentrate almost exclusively in
the V channel ($28$ on GPT-2, $5{,}229$ at $2.8$B, $4{,}179$ at
$6.9$B).

The shares in Table~\ref{tab:theorycensus} use only these exact moments and assume nothing about the shape of the null distribution. For the head classes the sampled null is in practice close to Gaussian, as the many independent subspace overlaps aggregate, while the rank-one classes follow the Beta law of Lemma~\ref{lem:beta}, which is why the neuron census reports exceedance counts rather than z-tails.

\paragraph{Shared-rotation null distributions for the
interface-matrix classes.}
Interface classes violate the sparsity assumption because nearly
every head genuinely reads the embedding, so their signal is dense
rather than sparse. They are scored against shared-rotation null
distributions. The interface matrix's Gram $H$ is conjugated by
Haar-random rotations, $H \to Q H Q^{\top}$ with $Q \in O(d)$, the
statistic is recomputed for each rotation, and each edge is scored
as a z-score against the mean and standard deviation of that
ensemble. The rotations are shared across all readers of an
interface matrix, so the cost is a few hundred $d \times d$
conjugations rather than per-edge Monte Carlo. The ensemble is the
isotropic reference, which is anti-conservative under anisotropy,
so these classes are reported as effect-size rankings rather than
sparse discoveries.

\paragraph{Monte Carlo confirmation.}
Both closed forms are verified by sampling the null distribution
directly (scripts released). For a pair with Grams $G$ and $H$,
Haar-random rotations $Q_1, \ldots, Q_n \in O(d)$ give the
ensemble
\begin{equation}
C_i \;=\;
\sqrt{\frac{\operatorname{tr}\!\big(G\, Q_i H Q_i^{\top}\big)}
{\operatorname{tr}(G)\,\operatorname{tr}(H)}},
\qquad i = 1, \ldots, n,
\label{eq:isoensemble}
\end{equation}
which keeps both sides' singular values and randomizes only the
relative orientation of the read and write subspaces, so
$\{C_i\}$ samples the pair's theoretical null distribution
exactly. The ensemble's standard deviation, written
$s_{\mathrm{theory}}$, is the spread that chance alignment would
produce for the pair. Across
$20$ combinations of reader and writer singular-value profiles at $d = 768$, sampled Haar
rotations give $\mathbb{E}[C^2] = 1/d$ within sampling error
($|z| \le 1.6$), and profiles with equal singular values attain
$1/d$ exactly with zero variance, confirming that the singular
values control only the fluctuations. The
rank-one cosine law matches
$\mathrm{Beta}(\tfrac12, \tfrac{d-1}{2})$ by Kolmogorov--Smirnov at
$d = 3, 8, 768$ ($p = 0.55$, $0.71$, $0.71$), with the first four
moments correct to four decimals.

\section{Computing the map: algorithms and reproducibility}
\label{app:pipeline}

\subsection{Weight preprocessing: LayerNorm folding}
\label{app:lnfold}

With layer normalization LN(.), a transformer's sublayers read the transformed activation from the residual stream $\mathrm{LN}(x)$, not the raw activation $x$. 
LayerNorm decomposes into a fixed linear map followed by an input-dependent
scalar and a constant shift,
\begin{equation}
\mathrm{LN}(x) \;=\; \frac{1}{\sigma(x)}\,
\operatorname{diag}(\gamma)\Big(I - \tfrac1d
\mathbf{1}\mathbf{1}^{\top}\Big)\, x \;+\; \beta,
\end{equation}
where $\sigma(x)$ is the standard deviation of the coordinates of
$x$ at that position and $\mathbf{1}$ is the all-ones vector. Both
pieces of the linear factor are input-independent, so they fold
into the weights. To account for LayerNorm, every matrix $W$ that reads the stream through
this LayerNorm is replaced by
\begin{equation}
\widetilde{W} \;=\;
\Big(I - \tfrac1d \mathbf{1}\mathbf{1}^{\top}\Big)
\operatorname{diag}(\gamma)\, W,
\qquad\text{so that}\qquad
W^{\top}\mathrm{LN}(x) \;=\;
\frac{1}{\sigma(x)}\,\widetilde{W}^{\top}x \;+\; W^{\top}\beta.
\label{eq:lnfold}
\end{equation}
The rescaling $\operatorname{diag}(\gamma)$ and the centering
projector are thus absorbed into every reading matrix, and the
constant $W^{\top}\beta$ folds into the sublayer's bias, which the
map's statistics never touch. After folding, the only input
dependence separating a read from the raw stream is the scalar
gain $1/\sigma(x)$.

Writing matrices are treated with the complementary operation.
Every write into the stream is centered along the stream
dimension, removing its component in the $\mathbf{1}$ direction.
This changes nothing about any coupling, because the projector now
inside every reader annihilates that component anyway. Its purpose
is to make the stored weights honest. A writer's matrix then
contains only what some reader could ever receive, so the norms
and couplings the map computes from writers measure transmissible
signal and no dead mass. All couplings in the map are computed
exactly on these folded and centered weights.

What folding cannot remove is the gain $1/\sigma(x)$, one scalar
per position multiplying every coordinate of a read. Two
consequences follow. A common scalar rescales all of a reader's
incoming couplings equally, so the ranking of edges into a fixed
reader is unaffected by it. What does vary with input is the
overall strength of each write--read hop, so path strengths
chained across layers carry a per-hop multiplicative uncertainty
set by the spread of $\sigma(x)$.

\subsection{From $C^2$ to algorithms}
\label{app:guide}

This subsection and the three that follow show how the statistic
of Eq.~\ref{eq:statistic} becomes the released algorithms.
Throughout, a \emph{connection class} means one type of reader paired with
one type of writer, for example head $\to$ neuron, every eligible
pair of that type scored and calibrated together
(Table~\ref{tab:census} lists all $18$). This subsection tabulates
every reader and writer with its Gram and its normalizer. The next
three each cover one shape of pair, both sides full matrices
(Appendix~\ref{app:gemm}), one side a single vector
(Appendix~\ref{app:mixed}), and both sides single vectors
(Appendix~\ref{app:stream}). These subsections introduce no new
mathematics, only the definition of $C$ unwound by trace cyclicity
and applied at scale.

\paragraph{Ingredients.}
Every score in the map is assembled from a small set of
precomputed ingredients. Each head contributes four factors
$W_Q, W_K, W_V, W_O \in \R^{d \times d_{\mathrm{head}}}$
($768 \times 64$ on GPT-2 small). From these we first
precompute and cache the four
$d_{\mathrm{head}} \times d_{\mathrm{head}}$ \emph{inner Grams}
per head,
$g_Q := W_Q^{\top} W_Q$, $g_K := W_K^{\top} W_K$,
$g_V := W_V^{\top} W_V$, $g_O := W_O^{\top} W_O$. The inner Grams
are the reusable blocks from which the full Grams, the
normalizers, and the mixed-class quadratic forms are all
assembled. They never appear in a final
score. The factors also compose into the
two per-head circuits, the QK circuit $W_{QK} := W_Q W_K^{\top}$,
which scores query-key pairs, and the OV circuit
$W_{OV} := W_O W_V^{\top}$, which is the head's write operator. The
interface matrices contribute the embedding and positional writers $W_E$
and $W_{\mathrm{pos}}$ and the unembedding reader $W_U$. Each
neuron $m$ contributes its read and write vectors
$r_m, w_m \in \R^{d}$, used unit-normalized as
$\hat{r}_m, \hat{w}_m$ throughout.

\begin{table}[htbp]
\centering
\caption{\textbf{Every reader and writer of the map}, as
instantiations of $G = R^{\top} R$ and $H = W W^{\top}$. Only the
three interface-matrix Grams are built as arrays (the
rotation-null sampling conjugates them). A dash marks a Gram never
formed, collapsed through the inner-Gram identities of
Appendix~\ref{app:gemm}; neuron normalizers are folded into the
unit vectors up front.}
\label{tab:rw}
\footnotesize
\setlength{\tabcolsep}{3pt}
\begin{tabular}{llllc}
\toprule
Component & Matrix & Gram & Normalizer & Rank \\
\midrule
\multicolumn{5}{l}{\emph{Head, reader side (one row per channel)}} \\
\quad K-composition & $R = W_{QK,r}$ & --- &
  $\lVert W_{QK} \rVert_F^2 = \operatorname{tr}(g_Q g_K)$ & $\le d_{\mathrm{head}}$ \\
\quad Q-composition & $R = W_{QK,r}^{\top}$ & --- &
  same as K-composition & $\le d_{\mathrm{head}}$ \\
\quad V-composition & $R = W_{OV,r}$ & --- &
  $\lVert W_{OV} \rVert_F^2 = \operatorname{tr}(g_O g_V)$ & $\le d_{\mathrm{head}}$ \\
\midrule
\multicolumn{5}{l}{\emph{Head, writer side (one row serves all three channels)}} \\
\quad head write & $W = W_{OV,w}$ & --- &
  $\lVert W_{OV} \rVert_F^2 = \operatorname{tr}(g_O g_V)$ & $\le d_{\mathrm{head}}$ \\
\midrule
\multicolumn{5}{l}{\emph{Neurons}} \\
\quad neuron read & $R = r_m^{\top}$ & --- &
  $\lVert r_m \rVert_2^2 = r_m^{\top} r_m$, folded into $\hat{r}_m$ & $1$ \\
\quad neuron write & $W = w_m$ & --- &
  $\lVert w_m \rVert_2^2 = w_m^{\top} w_m$, folded into $\hat{w}_m$ & $1$ \\
\midrule
\multicolumn{5}{l}{\emph{Interfaces}} \\
\quad embedding write & $W = W_E$ & $H = W_E W_E^{\top}$ &
  $\lVert W_E \rVert_F^2 = \operatorname{tr} H$ & $\le d$ \\
\quad positional write & $W = W_{\mathrm{pos}}$ & $H = W_{\mathrm{pos}} W_{\mathrm{pos}}^{\top}$ &
  $\lVert W_{\mathrm{pos}} \rVert_F^2 = \operatorname{tr} H$ & $\le d$ \\
\quad unembedding read & $R = W_U^{\top}$ & $G = W_U W_U^{\top}$ &
  $\lVert W_U \rVert_F^2 = \operatorname{tr} G$ & $\le d$ \\
\bottomrule
\end{tabular}
\end{table}

\paragraph{Assembling an edge.}
Every edge in the map is scored the same way. Pick one reader row
of Table~\ref{tab:rw} and one writer row, check that the pair is
causally eligible (the masks are listed in
Table~\ref{tab:census}), and evaluate
$C^2 = \operatorname{tr}(GH)/(\operatorname{tr} G \cdot
\operatorname{tr} H)$ with that reader's $G$ and that writer's
$H$, through the collapsed forms of Appendix~\ref{app:gemm}.
The numerators of Table~\ref{tab:channels} are the gain-weighting
of Proposition~\ref{prop:factor} in computational form. The
K-composition numerator weights the cross factor by the
\emph{query} inner Gram $g_Q$, even though K-composition reads
through the key factor, a deliberate reversal discussed in
Appendix~\ref{app:statistic}. Two normalizers serve all four head rows,
$\operatorname{tr}(g_Q g_K) = \lVert W_{QK} \rVert_F^2$ for the K-
and Q-composition readers and $\operatorname{tr}(g_O g_V) = \lVert
W_{OV} \rVert_F^2$ for both the V-composition reader and the
writer.

\subsection{Full-rank pairs, factored traces through the inner Grams}
\label{app:gemm}

This subsection covers the computation of $C$ when both the reader
and the writer are full matrices, head $\to$ head in its three
channels and the interface-matrix $\leftrightarrow$ head pairs. In the
case of K-composition, the numerator
$\lVert RW \rVert_F^{2} = \lVert W_{QK,r} W_{OV,w} \rVert_F^{2}$
is needed for
every causally eligible pair, $9{,}504$ per channel on GPT-2.
Computing it naively would require multiplying two $d \times d$
matrices per pair, $d^{3} \approx 4.5 \times 10^{8}$ multiply-adds each at
GPT-2's width and growing cubically with the stream, about
$10^{13}$ operations for the head-to-head classes alone. The goal
of this subsection is to evaluate the same number using only
$d_{\mathrm{head}}$-wide objects, where the largest per-pair cost
is the $d\, d_{\mathrm{head}}^{2}$ multiplication that builds the
cross factor, smaller by a factor of
$(d / d_{\mathrm{head}})^{2}$, which is $144$ on GPT-2 and
$1{,}024$ on Pythia-6.9B.

Substitute the factored forms
$W_{QK,r} = W_{Q,r} W_{K,r}^{\top}$ and
$W_{OV,w} = W_{O,w} W_{V,w}^{\top}$ into
$\lVert M \rVert_F^{2} = \operatorname{tr}(M^{\top} M)$:
\begin{align}
\lVert W_{QK,r} W_{OV,w} \rVert_F^{2}
&= \operatorname{tr}\big(W_{OV,w}^{\top}\, W_{QK,r}^{\top}\,
   W_{QK,r}\, W_{OV,w}\big) \nonumber \\
&= \operatorname{tr}\big(W_{V,w} W_{O,w}^{\top}\;
   W_{K,r} W_{Q,r}^{\top}\; W_{Q,r} W_{K,r}^{\top}\;
   W_{O,w} W_{V,w}^{\top}\big) \nonumber \\
&= \operatorname{tr}\big(X^{\top} g_{Q}\, X\, g_{V}\big),
\qquad X := W_{K,r}^{\top} W_{O,w}.
\label{eq:factored}
\end{align}
The last step is one cyclic move and a regrouping. Cycling the
leading $W_{V,w}$ to the end joins it with its transpose as the
inner Gram $g_V := W_{V,w}^{\top} W_{V,w}$; the central pair
$W_{Q,r}^{\top} W_{Q,r}$ is $g_Q$; and the two mixed products
flanking $g_Q$ are $X^{\top} = W_{O,w}^{\top} W_{K,r}$ and $X$.
Every matrix in the final expression is
$d_{\mathrm{head}} \times d_{\mathrm{head}}$, and the stream
width $d$ survives only inside the single product that builds
$X$. The denominators collapse by the same move,
$\lVert W_{QK,r} \rVert_F^{2}
= \operatorname{tr}(W_{K,r} W_{Q,r}^{\top} W_{Q,r} W_{K,r}^{\top})
= \operatorname{tr}(g_Q\, g_K)$ and
$\lVert W_{OV,w} \rVert_F^{2} = \operatorname{tr}(g_O\, g_V)$, so
one elementwise division and a square root finish $C$ for every
pair. Table~\ref{tab:channels} instantiates the numerator for all
three channels.

\begin{table}[h]
\centering
\caption{\textbf{The factored numerator for each head-to-head
channel.} The writer always contributes $W_{O,w}$ and $g_{V,w}$;
the channels differ in which reader factor forms the cross factor
$X$ and which reader inner Gram weights it.}
\label{tab:channels}
\footnotesize
\begin{tabular}{llll}
\toprule
Channel & Reader $R$ & Cross factor $X$ & $\lVert R\,W_{OV,w} \rVert_F^2$ \\
\midrule
K-composition & $W_{QK,r}$ & $W_{K,r}^{\top} W_{O,w}$ & $\operatorname{tr}(X^{\top} g_{Q,r}\, X\, g_{V,w})$ \\
Q-composition & $W_{QK,r}^{\top}$ & $W_{Q,r}^{\top} W_{O,w}$ & $\operatorname{tr}(X^{\top} g_{K,r}\, X\, g_{V,w})$ \\
V-composition & $W_{OV,r}$ & $W_{V,r}^{\top} W_{O,w}$ & $\operatorname{tr}(X^{\top} g_{O,r}\, X\, g_{V,w})$ \\
\bottomrule
\end{tabular}
\end{table}

Every ingredient of the score is now a cached
$d_{\mathrm{head}}$-wide object, and the coefficient assembles
as, for a K-composition edge,
\begin{equation}
C(r, w) \;=\;
\sqrt{\frac{\operatorname{tr}\!\big(X^{\top} g_{Q,r}\, X\, g_{V,w}\big)}
{\operatorname{tr}(g_{Q,r}\, g_{K,r})\;
 \operatorname{tr}(g_{O,w}\, g_{V,w})}},
\qquad X = W_{K,r}^{\top} W_{O,w},
\label{eq:assembled}
\end{equation}
with the other channels substituting their row of
Table~\ref{tab:channels}. The four inner-Gram stacks are computed
once per model and reused by every pair, and batched tensor
operations evaluate the formula for all ordered pairs of the
model's $N_h$ heads at once. The released code is the reference
for the exact batching.

Against the naive route's ${\sim}10^{13}$ operations, evaluating
all three GPT-2 channels this way costs about
$2 \times 10^{11}$ multiply-adds, and the gap widens as
$(d/d_{\mathrm{head}})^{2}$ with width, the same comparison at
Pythia-6.9B being ${\sim}2 \times 10^{14}$ against
${\sim}10^{17}$. The memory gap is as decisive as the arithmetic
one, because the naive route's composed operators are $d^{2}$
floats each, $137$ GB in total at Pythia-6.9B against the card's
$32$ GB, while the factored route's resident tensors, the factor
and Gram stacks and one cross-factor block, total a few gigabytes
at the same scale. Measured, the $C$ tables of all three channels
fill in $0.1$ seconds for GPT-2 and $14$ seconds for Pythia-6.9B
on one GPU.

The interface-matrix $\leftrightarrow$ head classes evaluate the
same master formula with the interface matrix's materialized Gram
(Table~\ref{tab:rw}) on one side. The head side's Gram is the
sandwich $G_r = W_{K,r}\, g_{Q,r}\, W_{K,r}^{\top}$
(K-composition; the other channels analogously), so
$\operatorname{tr}(G_r H) = \operatorname{tr}\big(g_{Q,r}\,
(W_{K,r}^{\top} H\, W_{K,r})\big)$ and the head Gram is again
never formed. These classes are always
causally legal because the embedding and positional writers act
before the first layer and the unembedding reader acts after the
last.

\subsection{Mixed pairs, quadratic forms through the skinny factors}
\label{app:mixed}

This subsection covers the algorithm for computing $C$ when
exactly one side of the pair is a neuron, the head
$\leftrightarrow$ neuron channels and the interface-matrix
$\leftrightarrow$ neuron pairs. A neuron reads and writes through
single vectors, so one side of every such pair is a single row or
column whose Gram has rank one, and the score collapses from the
dense machinery of the previous subsection to a small quadratic
form. We derive head $\to$ neuron in full, because the remaining
mixed classes follow the same pattern.

The edge asks how strongly the write operator of head $w$
overlaps the input direction of neuron $m$, so the reader is the
single row $R = \hat{r}_m^{\top}$ and the writer is
$W = W_{OV,w}$. Their product
$RW = \hat{r}_m^{\top} W_{OV,w}$ is a
$[1 \times d][d \times d]$ multiplication, hence a single
row of length $d$, and the Frobenius norm of a single row is
that vector's Euclidean length, so
$\lVert RW \rVert_F = \lVert W_{OV}^{\top} \hat{r}_m \rVert_2$.
Evaluating this norm never requires the $d \times d$ operator
itself. Substituting $W_{OV}^{\top} = W_V W_O^{\top}$ and
multiplying from the right,
\begin{equation}
W_{OV}^{\top} \hat{r}_m = W_V \big(W_O^{\top} \hat{r}_m\big)
= W_V\, x_m,
\qquad x_m := W_O^{\top} \hat{r}_m \in \R^{d_{\mathrm{head}}},
\label{eq:mixedproj}
\end{equation}
the computation passes through the $d_{\mathrm{head}}$-vector $x_m$, which is the
neuron's read direction expressed in the head's output channels.
The squared numerator then collapses onto the cached inner Gram,
$\lVert W_V x_m \rVert_2^2 = x_m^{\top} g_V\, x_m$. Both
denominators are already precomputed as well, since
$\lVert \hat{r}_m \rVert_2 = 1$ by construction and
$\lVert W_{OV} \rVert_F^2 = \operatorname{tr}(g_O g_V)$, so the
edge's coefficient assembles entirely from cached
$d_{\mathrm{head}}$-wide objects,
\begin{equation}
C(w, m) \;=\;
\sqrt{\frac{x_m^{\top} g_{V,w}\, x_m}
{\operatorname{tr}(g_{O,w}\, g_{V,w})}},
\qquad x_m = W_{O,w}^{\top}\, \hat{r}_m,
\label{eq:assembledmixed}
\end{equation}
the mixed-pair analog of Eq.~\ref{eq:assembled}.

\paragraph{Batching.}
To efficiently compute the squared numerator
$\lVert W_V x_m \rVert_2^2 = x_m^{\top} g_V\, x_m$ between one head
and all $N_t = L\, d_{\mathrm{mlp}}$ neurons in a single pass
($36{,}864$ on GPT-2 small), stack every
neuron's unit
read vector $\hat{r}_m^{\top} = r_m^{\top} / \lVert r_m \rVert$ as
the rows of $\hat{R} \in \R^{N_t \times d}$. One matrix
multiplication $X = \hat{R}\, W_O$ then delivers all the
projections at once, row $m$ being $x_m^{\top}$, and batched
tensor operations evaluate the $N_t$ quadratic forms
$x_m^{\top} g_V\, x_m$ from it directly, the released code being
the reference. The denominators need no batching at all. The
neuron side of every score is $\lVert R \rVert_F = 1$ by
construction, since the rows of $\hat{R}$ are unit vectors, and the
head side is the single constant
$\lVert W \rVert_F = \lVert W_{OV} \rVert_F
= \sqrt{\operatorname{tr}(g_O g_V)}$ shared by all $N_t$ scores
of the row, so finishing $C$ is one division of the row by one
number. Per head this is about $2 \times 10^{9}$ multiply-adds,
and nothing larger than $N_t \times d_{\mathrm{head}}$ is
ever held.

The head\,$\leftrightarrow$\,neuron classes are computed one head
per batch, each pass scoring that head against all $N_t$ neurons
as one row of an $[N_h \times N_t]$ table, iterating through the
heads until the table is full. The causal mask, applied to the
finished table, allows equality here, $l_w \le l_r$, because
within a block attention writes to the residual stream before the
MLP reads it, so a head can feed the neurons of its own layer.

\paragraph{Neuron $\to$ head.}
These channels ask the reverse question, whether a neuron's write
vector lands where a later head reads, and the derivation is the
mirror image of head $\to$ neuron with the writer and reader
roles swapped, so we state the result directly. For
K-composition,
\begin{equation}
C(m, r) \;=\;
\sqrt{\frac{y^{\top} g_{Q,r}\, y}
{\operatorname{tr}(g_{Q,r}\, g_{K,r})}},
\qquad y = W_{K,r}^{\top}\, \hat{w}_m,
\label{eq:assembledmixedrev}
\end{equation}
and the Q- and V-composition channels substitute
$y = W_{Q,r}^{\top} \hat{w}_m$ against $g_{K,r}$ and
$y = W_{V,r}^{\top} \hat{w}_m$ against $g_{O,r}$, with the
normalizers of Table~\ref{tab:rw}. Batching is identical to
head $\to$ neuron with the roles reversed. The mask here is
strict, $l_w < l_r$, because an MLP writes after everything else
in its layer.

\paragraph{Interface matrices $\leftrightarrow$ neurons.}
These classes pair a neuron vector with an interface matrix's
Gram, and no
factoring is needed at all, because these are the three Grams the
implementation genuinely materializes
(Appendix~\ref{app:statistic}), $H_{\mathrm{emb}} = W_E
W_E^{\top}$, $H_{\mathrm{pos}} = W_{\mathrm{pos}}
W_{\mathrm{pos}}^{\top}$, and $G_{\mathrm{unemb}} = W_U
W_U^{\top}$, each computed once as a $d \times d$ array when the
weights are loaded. The neuron side is a unit vector, so the
coefficient is a normalized quadratic form in the materialized
Gram,
\begin{equation}
C(H, m) \;=\;
\sqrt{\frac{\hat{r}_m^{\top} H\, \hat{r}_m}{\operatorname{tr} H}},
\qquad
C(m, G_U) \;=\;
\sqrt{\frac{\hat{w}_m^{\top} G_U\, \hat{w}_m}
{\operatorname{tr} G_U}},
\label{eq:assemblediface}
\end{equation}
the left form for the embedding and positional writers
$H \in \{H_{\mathrm{emb}}, H_{\mathrm{pos}}\}$ against the
neuron's read vector, the right form for the neuron's write
vector against the unembedding reader
$G_U := G_{\mathrm{unemb}}$, evaluated for all $N_t$ neurons at
once by batched tensor operations. The same materialized arrays
also supply their eigenvalues to the exact null distribution below.

\paragraph{Exact rotation null distributions.}
A rank-one reader also makes the rotation null distribution exact
and inexpensive to sample.
Under the null hypothesis the read direction is a uniformly random
unit
vector, and in the eigenbasis of $H$ the statistic
$v^{\top} Q H Q^{\top} v = \sum_i \lambda_i u_i^2$ is a weighted
sum of the squared coordinates of a uniform unit vector $u$, with
$\lambda_i$ the eigenvalues of $H$, so draws from the null
distribution sample $u$
directly and never form a $d \times d$ rotation matrix.

\subsection{Rank-one pairs, neuron $\to$ neuron streamed}
\label{app:stream}

This subsection covers the computation of $C$ for the last and
largest class, neuron $\to$ neuron, where both the reader and the
writer are single vectors and no general machinery is needed at
all. The product
$RW = r^{\top} w$ is a $1 \times 1$ matrix, the Frobenius norm of a
single row or column is the vector's length, and so the score is a
plain cosine directly from the definition,
$C = |r^{\top} w| / (\lVert r \rVert\, \lVert w \rVert)
= |\cos\theta|$,
with $\theta$ the angle between $r$ and $w$. The
master formula agrees, since the middle scalar factors out of the
trace, $\operatorname{tr}(r\, r^{\top} w\, w^{\top}) =
(r^{\top} w)^2$. What makes this class hard is not the formula but
the count, $6.2 \times 10^8$ candidate pairs on GPT-2 small and
$1.3 \times 10^{11}$ at Pythia-6.9B, far too many to hold in
memory at once. The denominators are folded in up front by unit-normalizing each
vector once, after which one matrix product of a layer's
$d_{\mathrm{mlp}}$ unit write vectors against a later layer's
$d_{\mathrm{mlp}}$ unit read vectors emits the full
$d_{\mathrm{mlp}} \times d_{\mathrm{mlp}}$ tile of cosines for
that layer pair. The computation runs one tile at a time through
the $L(L-1)/2$ ordered layer pairs ($66$ tiles of $38$ MB and
$9.4 \times 10^{6}$ cosines each on GPT-2 small). A single pass
suffices, because the class is censused rather than selected
(Section~\ref{sec:map}). Each tile contributes its per-span
histogram for Figure~\ref{fig:nncos}, its pairs beyond the wire
threshold $|\cos| \ge 0.5$, and its maximum. Peak memory is one
$38$ MB tile, and the $6.2 \times 10^8$ scores never exist at
once.

\paragraph{Summary.}
In summary, our efficient implementation of the coupling
coefficient $C$ varies by the type of connections, as follows:
\begin{enumerate}
\item When both sides are full matrices, factored traces through
  the inner Grams score all pairs of the class at once
  (Appendix~\ref{app:gemm}).
\item When one side is a neuron, quadratic forms score one
  component against every neuron at once
  (Appendix~\ref{app:mixed}).
\item When both sides are neurons, the cosines stream through one
  tile at a time.
\end{enumerate}

\subsection{Census and run configuration}

Table~\ref{tab:census} defines the $18$ connection classes with
their causal masks and candidate-count formulas.
Table~\ref{tab:scalecensus} gives the candidate census for every
mapped model. Selection applies only to the head-to-head classes
(Application 1, Appendix~\ref{app:communityrep}). Interface
classes are read as effect-size rankings against their rotation
null distributions, and the neuron classes are censused against
the exact chance law (Section~\ref{sec:map}).

\begin{table}[t]
\centering
\caption{\textbf{The $18$ connection classes.} Reader--writer
pairing, causal mask, and candidate-count formula, instantiated
for GPT-2 small. The head$_{\{K,Q,V\}}$ rows aggregate three
separately scored channel classes over the same pairs.}
\label{tab:census}
\footnotesize
\setlength{\tabcolsep}{3.5pt}
\begin{tabular}{lll}
\toprule
Connection class & Mask & Count formula \\
\midrule
head $\to$ head$_{\{K,Q,V\}}$ & $l_w < l_r$ & $3 \times 12^2 \times 66$ \\
embedding $\to$ head$_{\{K,Q,V\}}$ & always & $3 \times 144$ \\
positional $\to$ head$_{\{K,Q,V\}}$ & always & $3 \times 144$ \\
head $\to$ unembedding & always & $144$ \\
head $\to$ neuron & $l_w \le l_r$ & $(12 \times 3072) \times 78$ \\
neuron $\to$ head$_{\{K,Q,V\}}$ & $l_w < l_r$ & $3 \times (3072 \times 12) \times 66$ \\
embedding $\to$ neuron & always & $36{,}864$ \\
positional $\to$ neuron & always & $36{,}864$ \\
neuron $\to$ unembedding & always & $36{,}864$ \\
neuron $\to$ neuron & $l_w < l_r$ & $3072^2 \times 66$ \\
\bottomrule
\end{tabular}
\end{table}

Weights are loaded with layer norm folded and writer matrices centered
(Appendix~\ref{app:lnfold}).
All decompositions are float64. Natural text is a fixed
public sample of pile-10k, $32$ sequences of $256$ tokens for the
ablation experiments and of $128$ tokens for the subspace deletions.
Repeated-block sequences use $128$-token uniform-random blocks. All seeds
are fixed. Every experiment in this paper runs on one consumer GPU. The
full GPT-2 map builds in $15$\,s, the cross-model ablation
experiments run in minutes per model, and the full maps of
Pythia-2.8B and 6.9B build in $4$ and $11$ minutes.

\section{Application 1: selection and the community ablation}
\label{app:communityrep}

\subsection{The selection standard}
\label{app:selection}

The stratum-fitted null distribution of Eq.~\ref{eq:empnull} is,
in the terminology of \citet{efron2004large}, the \emph{empirical
null distribution}, Application 1's selection standard.

\paragraph{Strata.}
A \emph{stratum} is the set of candidate edges that are calibrated
together, and every candidate edge belongs to exactly one. Two
coordinates define it. The first is the connection class, because
different kinds of pairs have different chance behavior. A cosine
between two rank-one neuron vectors, a head circuit read against a
neuron vector, and two head circuits are geometrically different
statistics. The second is the layer separation between writer and
reader, because the stream's geometry drifts with depth, so nearby
components share more ambient structure than distant ones.
``K-composition edges whose writer sits three layers below the
reader'' is one stratum, and ``neuron$\to$neuron pairs seven layers
apart'' is another. On GPT-2 each head-head channel contributes
eleven strata, one per separation, and each stratum receives its
own empirical null distribution, its own median and MADN, so an
edge is always judged against chance for its own kind of pair at
its own distance. Pooling the separations would instead judge every
pair against a baseline dominated by the most numerous
separations.

\paragraph{The empirical null distribution.}
The stratum's empirical spread is the robust estimate of
Eq.~\ref{eq:empnull},
$s_{\mathrm{emp}} = 1.4826 \cdot \operatorname{MAD}_s$. For the
neuron--neuron class, where Lemma~\ref{lem:beta} supplies the
theoretical spread $s_{\mathrm{theory}} = 1/\sqrt{d}$ in closed
form, the empirical spread agrees within $3\%$ (script released).
For rank-one pairs the empirical and theoretical
null distributions therefore coincide, and for subspace pairs
they diverge, so selection standardizes against the empirical
null distribution, fitted per stratum (edge class $\times$ layer
span) by central matching \citep{efron2004large}, in which the
central quantiles of the stratum's score distribution estimate
the center and the standard deviation of the null distribution on
the assumption that true channels are sparse in the bulk.

\paragraph{Why robust estimators.}
The robust estimators of Eq.~\ref{eq:empnull} are a measured
necessity rather than a convention. Refitting the same strata
with the stratum mean and standard deviation lets the signal tail
inflate the estimated standard deviation of the null distribution
by up to $5\times$ and removes $61\%$ of the head--head map's
edges, the signal corrupting its own null distribution (script
released). At scale, this selection standard keeps $115{,}164$ of
the $1{,}523{,}712$ head-head candidates at $2.8$B and
$115{,}793$ at $6.9$B.

\subsection{The community ablation}

\paragraph{Protocol.}
\label{sec:demo2}\label{sec:universality2}%
The readout is the \emph{induction gain}, the drop in
next-token loss between a block of random tokens and an immediately
following copy of it, $12.40$ nats for clean
GPT-2.\footnote{Each prompt is a $128$-token block of
uniform-random tokens followed by a copy of itself. The gain is the
mean loss on the first block minus the mean loss on the second,
averaged over $32$ prompts with independently drawn blocks. Random
tokens cannot be predicted from the weights, but every token of the
second block appeared $128$ positions earlier in the same prompt, so
the gain isolates in-context copying.} Head sets are ablated by
freezing their outputs at corpus means (mechanics below), each set
against five random control sets matched in size and per-layer head
counts, with Pile loss tracked as a check on nonspecific damage.

\paragraph{Non-additivity.}
The whole community destroys far more than its parts sum to
(Table~\ref{tab:decomp}), and such non-additivity is the signature
of redundancy, documented as backup heads, self-repair, and
non-unique task circuits
\citep{wang2022interpretability,mcgrath2023hydra,gong2026conditional,chen2026circuits}.
The decomposition places that redundant machinery, invisible to
single-head identification, inside a boundary the map drew from
weights alone.

\begin{table}[!ht]
\centering
\caption{\textbf{The community ablation replicates across models.}
Percent of the clean induction gain destroyed by mean-ablating each
head set. ``Holds'' is how many of the model's top-five behavioral
induction heads fall in the community. Ctrl median is over the
whole-community row's five matched random sets. Communities come from each model's own selected head graph
(protocol below). Pythia-6.9B is
absent due to our hardware's RAM constraint.}
\label{tab:communityrep}
\footnotesize
\setlength{\tabcolsep}{4.5pt}
\begin{tabular}{lccccc}
\toprule
Model & Community (holds) & Five heads & Comm.\ minus five & Whole comm. & Ctrl median \\
\midrule
GPT-2 & $n{=}30$ (5/5) & $60.8\%$ & $72.9\%$ & $94.7\%$ & $2.9\%$ \\
GPT-2-medium & $n{=}47$ (5/5) & $4.7\%$ & $75.4\%$ & $93.4\%$ & $2.0\%$ \\
GPT-2-large & $n{=}184$ (4/5) & $3.6\%$ & $102.0\%$ & $108.5\%$ & $16.2\%$ \\
GPT-Neo-125M & $n{=}32$ (5/5) & $63.7\%$ & $100.5\%$ & $101.3\%$ & $86.8\%$ \\
Pythia-160m & $n{=}25$ (3/5) & $60.0\%$ & $92.1\%$ & $86.2\%$ & $27.4\%$ \\
Pythia-2.8B & $n{=}443$ (4/5) & $41.3\%$ & $96.9\%$ & $97.1\%$ & $1.0\%$ \\
\bottomrule
\end{tabular}
\end{table}

\paragraph{The nested decomposition.}
The GPT-2 decomposition behind the community ablation (Section~\ref{sec:scan})
ablates five nested head sets, from the five induction heads alone,
through the previous-token head L4H11 and the $24$ remaining
community heads, up to the full community
(Table~\ref{tab:decomp}). If induction were carried solely by the
five induction heads, removing them would abolish the capability
and removing the rest of the community would have little effect.
Table~\ref{tab:decomp} rejects both halves of that null hypothesis.

\begin{table}[h]
\centering
\caption{\textbf{Decomposing the induction-community knockout in
GPT-2.} Fraction of the clean induction gain ($12.40$ nats) destroyed
by mean-ablating each head set. Each ablated set is compared with its
own five random control sets, matched to it in size and per-layer
head counts. Pile $\Delta$NLL is the natural-text
loss increase in nats. All numbers are stable across corpus seeds
(largest change $1.1$ points).}
\label{tab:decomp}
\footnotesize
\begin{tabular}{lrrrrr}
\toprule
Ablated set & $n$ & Gain destroyed & Ctrl median & Ctrl max & Pile $\Delta$NLL \\
\midrule
five induction heads & 5 & $60.8\%$ & $0.5\%$ & $1.4\%$ & $+0.09$ \\
previous-token head L4H11 & 1 & $1.9\%$ & $0.1\%$ & $0.6\%$ & $+0.03$ \\
remaining community heads & 24 & $16.1\%$ & $3.2\%$ & $21.7\%$ & $+0.22$ \\
community minus induction heads & 25 & $77.3\%$ & $3.7\%$ & $12.1\%$ & $+0.35$ \\
whole community & 30 & $93.8\%$ & $2.5\%$ & $5.1\%$ & $+0.53$ \\
\bottomrule
\end{tabular}
\end{table}

\paragraph{Ablation mechanics.}
Heads are removed by \emph{mean-ablation} of the value summary, not
by zeroing. For head $h$, let
$z_h(t) \in \mathbb{R}^{d_{\mathrm{head}}}$ denote its
attention-weighted value vector at position $t$, the quantity the
output factor $W_O$ turns into the head's write
(Appendix~\ref{app:conventions}). A clean forward pass over a
reference corpus of $N$ prompts of $T$ positions defines the frozen
summary
$\bar{z}_h = \frac{1}{NT} \sum_{n=1}^{N} \sum_{t=1}^{T}
z_h^{(n)}(t)$,
and the ablation replaces $z_h(t) \to \bar{z}_h$ at every position,
so the head still writes the constant $W_O \bar{z}_h$ into the
stream but carries no input-dependent signal. The reference corpus
is matched to the readout, the repeated-block corpus for the
induction gain and the Pile sample for the natural-text loss, so
the preserved average is the right one for the distribution being
measured. Mean-ablation is the conservative choice, because a
head's average output acts as a standing bias that downstream
LayerNorms and readers are calibrated to, and zeroing removes that
bias along with the signal, producing off-distribution damage that
inflates apparent importance
\citep{wang2022interpretability,zhang2024towards}. The mean is
taken over positions as well as prompts, so any positional
structure in the head's output is removed rather than preserved,
the stricter choice for a position-critical capability. Control
sets are ablated identically.

\paragraph{Cross-model protocol.}
The decomposition replicates across six models spanning three
families and both positional schemes
(\citealp{radford2019language,biderman2023pythia,black2021gptneo};
Table~\ref{tab:communityrep}). Every row uses the unified
pipeline. Communities are the Louvain partition (seed $0$) of the
model's own selected head graph, the top five induction heads are
identified behaviorally by induction-offset attention on repeated
random blocks, and three head sets are mean-ablated per model, the
five behavioral heads, the community minus those heads, and the
whole community, each against matched random controls as above.
The GPT-2 row repeats the protocol
of Table~\ref{tab:decomp} and agrees with it within the
documented corpus-seed variation. Pythia-6.9B is absent for a
hardware reason. Assembling the runnable fp32 model requires the
HF copy and the processing copy simultaneously, about $55$~GB
against our machine's $45$~GB of memory. Map building avoids this
by streaming weights, but ablation requires the assembled model,
so the $6.9$B map's ablation probe is Application 2's deletion
(Section~\ref{sec:interventions}).

\paragraph{Cross-model observations.}
Two regularities replicate in every model. Ablating the five
behaviorally identified induction heads never abolishes the
capability, destroying between $3.6\%$ and $63.7\%$ of the gain,
with the larger models the extreme cases of compensation ($4.7\%$
at GPT-2-medium, $3.6\%$ at GPT-2-large). And the parts overlap
heavily, since the two component sets sum to well over the whole
community's effect in every model. Three model-specific
observations accompany the table. At Pythia-2.8B the specificity
margin is the widest, $97.1\%$ against a $1.0\%$ control median.
At GPT-2-large the seed-$0$ community is a quarter of the model's
heads ($n = 184$ of $720$), so its controls are themselves
elevated and its deletion overshoots ($108.5\%$, at $+4.8$ nats
of natural-text cost). In GPT-Neo the matched random controls
reach $86.8\%$, consistent with Neo's fragile stream geometry, in
which $43\%$ of residual variance lies in a single dimension
\citep[cf.][]{timkey2021all}, so Neo carries the caveat rather
than the evidence.

\subsection{Additional results}
\label{app:extra}
\paragraph{Layer-separation profile.}
Table~\ref{tab:span} gives the wired-pair counts by
writer-to-reader layer separation behind the long-range finding (Section~\ref{sec:scan}).

\begin{table}[h]
\centering
\caption{Wired head pairs by writer-to-reader layer separation.
Eligible pairs are all ordered pairs with the writer in a strictly
earlier layer. A pair is wired if at least one of its three channels
is selected under $C$ (FDR $q = 0.05$).}
\label{tab:span}
\small
\begin{tabular}{lrrrrrrrrrrr}
\toprule
layer separation & 1 & 2 & 3 & 4 & 5 & 6 & 7 & 8 & 9 & 10 & 11 \\
\midrule
eligible pairs & 1584 & 1440 & 1296 & 1152 & 1008 & 864 & 720 & 576 & 432 & 288 & 144 \\
wired pairs & 102 & 88 & 77 & 65 & 59 & 69 & 83 & 68 & 48 & 40 & 18 \\
wired (\%) & 6.4 & 6.1 & 5.9 & 5.6 & 5.9 & 8.0 & 11.5 & 11.8 & 11.1 & 13.9 & 12.5 \\
\bottomrule
\end{tabular}
\end{table}

\paragraph{The recovered induction edges.}
The per-edge statistics behind the induction-wiring recovery, as ($C$, $z$, $q$) per
induction head: L5H1 ($0.102$, $6.3$, $9{\times}10^{-9}$), L5H5
($0.097$, $5.8$, $2{\times}10^{-7}$), L6H9 ($0.103$, $6.2$,
$2{\times}10^{-8}$), L7H2 ($0.087$, $5.6$, $6{\times}10^{-7}$),
L7H10 ($0.090$, $5.9$, $1{\times}10^{-7}$).

\paragraph{Copying scores.}
The receivers are copiers. By the copying score
\citep{elhage2021mathematical},
$\operatorname{copy}(h) =
\sum_{k} \operatorname{Re}(\lambda_k) /
\sum_{k} \lvert \lambda_k \rvert \in [-1, 1]$
over the eigenvalues of $W_{OV,h}$, the top receivers score at least
$0.994$. The S-inhibition head L8H10 is among the strongest anti-copiers
($-0.982$).

\section{Application 2: the subspace deletion across models and
scales}
\label{app:universality}

This appendix provides the detailed procedures for selecting the
top global bands in the residual stream that are the most
relevant to a model's induction capability. Most of the algorithm
is identical for every model, with a minor difference depending
on whether the model has learned positional embeddings or rotary embeddings (discussed in Appendix~\ref{app:pos}).

\subsection{Selecting top global bands}
\label{app:bandselect}

The construction uses the
attention-head factors and the interface matrices, each head's
$W_{Q,h}, W_{K,h}, W_{V,h}, W_{O,h} \in \mathbb{R}^{d \times d_{\mathrm{head}}}$
together with $W_E$, $W_{\mathrm{pos}}$ when present, and
$W_U^{\top}$ as
standalone factors, and no MLP neuron weight enters the estimate.
Each factor reads the stream through $X^{\top}$ or writes it
through $X$, and the construction asks which stream directions
this population of readers and writers uses. Taking the factor as
the reader, $R = X^{\top}$, and a unit direction $v$ as the
writer, Eq.~\ref{eq:statistic} squares to the Gram fraction of
Section~\ref{sec:interventions},
$C^{2}(X, v) = \lVert X^{\top} v \rVert^{2} / \lVert X
\rVert_F^{2} = v^{\top} G_X\, v / \operatorname{tr}(G_X)$, the
fraction of $X$'s squared weight lying along $v$. The roles also
swap freely, since setting $R = v^{\top}$ and $W = X$ produces the
transpose of the same vector, so either orientation gives the same
$C^{2}$.

The bands are the orthonormal directions maximizing this coupling
summed over every factor,
$\sum_X C^{2}(X, v) = v^{\top} S v$ with the pooled coupling
matrix of Eq.~\ref{eq:pooled}. Each unit-trace Gram has eigenvalues summing to $1$, one vote per
factor, so no factor dominates by size. The bands are the
successive maximizers of $v^{\top} S v$ over unit directions
orthogonal to their predecessors, the Rayleigh quotient problem,
so they are the leading eigenvectors of $S$, ordered by
eigenvalue, and $\lambda_i = \sum_X C^{2}(X, v_i)$ is the total
amount of coupling band $v_i$ captures across the entire
transformer. Since
each factor contributes exactly trace $1$ to $S$,
$\operatorname{tr}(S)$ equals the number of factors, so the
eigenvalues $\lambda_i$ partition the model's total coupling
budget among the bands, and
$\lambda_i / \operatorname{tr}(S)$ is the fraction of the total
load carried by band $v_i$. The eigenvalues decay smoothly with no privileged
cutoff (Figure~\ref{fig:bands}), and we screen the ten leading
bands $v_1, \ldots, v_{10}$ as candidates for the selection rule
below.

The two most position-specific bands, ranked by the ratio of
Eq.~\ref{eq:posratio}, are the deleted pair, and the couplings
entering the ratio are those of Eq.~\ref{eq:shares}, with the
positional matrix $\Pi$ defined for each positional style in
Appendix~\ref{app:pos}. These are the writer roles $W_E$ and
$W_{\mathrm{pos}}$ already play in the map's interface-matrix
classes.

\begin{figure}[t]
\centering
\includegraphics[width=0.55\linewidth]{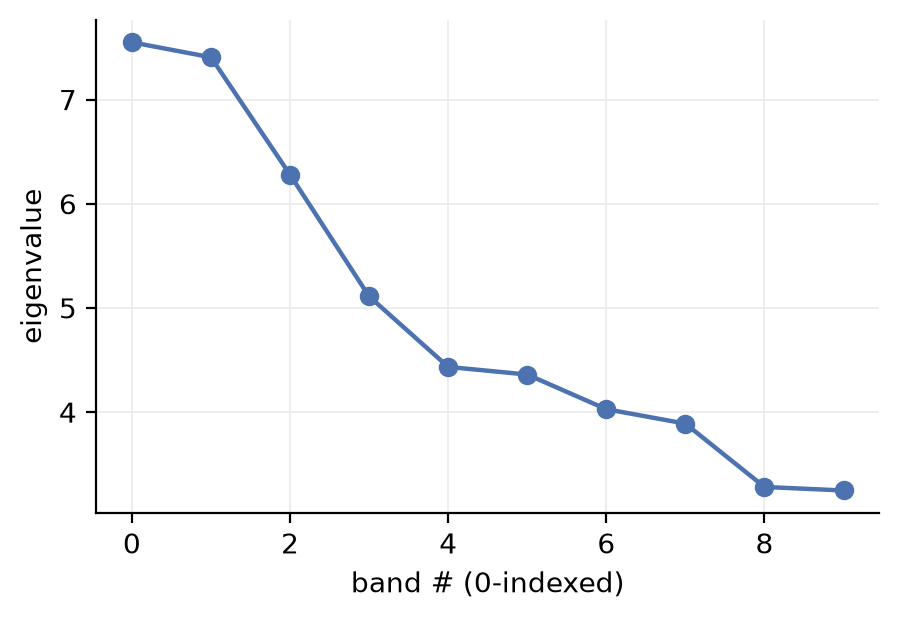}
\caption{\textbf{Eigenvalues of the pooled coupling matrix}
(GPT-2 small). Smooth decay with no sharp cutoff. The ten leading
bands are the candidate pool for the selection rule of
Section~\ref{sec:interventions}.}
\label{fig:bands}
\end{figure}

\subsection{Positional matrix for learned positional embeddings (GPT-2) and rotary embeddings (Pythia)}
\label{app:pos}

GPT-2 has a learned-position embedding matrix $W_{pos}$, which directly writes position into the stream at layer $0$ as a vector. In this case, the positional matrix is just the positional embedding matrix itself, $\Pi=W_{\mathrm{pos}}$. Therefore,
\begin{equation}
C^{2}_{\mathrm{pos}}(v_i) \;=\;
\frac{\lVert W_{\mathrm{pos}}^{\top} v_i \rVert^{2}}
{\lVert W_{\mathrm{pos}} \rVert_F^{2}}.
\label{eq:sharepos}
\end{equation}

But in modern transformers that rely on rotary embedding as in the Pythia family, positions enter the stream only by rotating queries and keys inside
attention, so there is no $W_{\mathrm{pos}}$ matrix to read. However, the stream still carries positional information. So we measure, or ``reconstruct'', the positional matrix $\Pi$ empirically from the residual stream's activations instead, and call the estimated positional matrix $\hat{\Pi}$. Specifically, we feed the model $32$ sequences
of $128$ tokens (a BOS token followed by $127$ uniform-random
tokens), so token content is pure noise and position is the only
systematic variable, and cache the residual-stream
state $h^{(s)}_t \in \mathbb{R}^{d}$ at the middle layer's input
for every sequence $s$ and position $t$. The resulting estimated positional matrix $\hat{\Pi} \in \mathbb{R}^{d \times T}$ has columns
\begin{equation}
\hat{\Pi}_t \;=\; \frac{1}{32} \sum_{s=1}^{32} h^{(s)}_t \;-\; \bar{h},
\qquad
\bar{h} \;:=\; \frac{1}{32\,T} \sum_{s=1}^{32} \sum_{t=1}^{T}
h^{(s)}_t,
\label{eq:posprofile}
\end{equation}
with $T = 128$. Averaging over sequences cancels token content and
keeps exactly what depends on position, and subtracting the mean $\bar{h}$
removes the position-independent component every position shares.
With our estimated positional matrix $\hat{\Pi}$, 
\begin{equation}
C^{2}_{\mathrm{pos}}(v_i) \;=\;
\frac{\lVert \hat{\Pi}^{\top} v_i \rVert^{2}}
{\lVert \hat{\Pi} \rVert_F^{2}}.
\label{eq:sharerot}
\end{equation}
The same construction runs unchanged at every rotary scale, from
Pythia-160m through 2.8B to 6.9B, with only the stream width $d$
and the middle-layer index adapting to the exact architecture.

\subsection{Robustness}
\label{app:rulerobustness}

First, we tested selecting the top two directions by the highest eigenvalues alone,
without regard to whether the direction carries more positional or token information, on Pythia-160m. We found
that deleting the two leading eigendirections destroys only
$46\%$ of the induction gain, against the $95.8\%$ that the
$\operatorname{PosRatio}$ selection achieves on the same model. This is consistent with the fact that induction is largely a positional phenomenon and confirms the importance of $\operatorname{PosRatio}$ as a selection criterion. 

Second, we tested the sensitivity of the rotary
estimated positional matrix $\hat{\Pi}$ to its sample size, building it from $4$ to
$128$ random-token sequences at three seeds per size on
Pythia-160m. Every run selects the identical pair of top induction-critical directions, with the
selected bands' positional couplings never below $125\times$
chance and no unselected band above $28\times$. This shows that our estimation protocol of $\hat{\Pi}$ is reliable and not sensitive to sample size.

Third, GPT-Neo-125M is absent from
Table~\ref{tab:cs2rep} because random $2$-dimensional deletions
already destroy $60$--$98\%$ of its induction gain. So
no $2$-dimensional deletion's effect is measurable, and the subspace deletion experiment is not meaningful for this specific model.

\section{Glossary}
\label{app:glossary}
{\footnotesize
\begin{longtable}{lp{9.6cm}}
\toprule
Term & Definition \\
\midrule
\endfirsthead
\toprule
Term & Definition \\
\midrule
\endhead
residual stream & the $d$-dimensional per-position vector every component reads from and adds to (Eq.~\ref{eq:additivity}) \\
QK / OV circuit & a head's pair-scoring form $W_{QK,h}$ / content-transforming operator $W_{OV,h}$ \\
K/Q/V-composition & the three channels by which one head's writes enter another's keys / queries / values \\
potential connectivity & a wired channel: write/read subspaces aligned in the weights, regardless of use \\
edge; wire & a head pair selected by Application 1 (FDR $q = 0.05$); a neuron$\to$neuron pair with $|\cos|>0.5$ \\
bands & leading eigendirections of the pooled coupling matrix, the directions most components read and write \\
copying score & normalized signed eigenvalue mass of $W_{OV,h}$ ($+1$ copy, $-1$ suppress) \\
theoretical null distribution & a pair's coupling distribution under Haar rotation of one side, singular values fixed (Prop.~\ref{prop:chance}, Eq.~\ref{eq:isoensemble}) \\
empirical null distribution & the stratum's null distribution, fitted robustly from the observed scores (Eq.~\ref{eq:empnull}); Application 1's selection standard \\
previous-token score & a head's mean attention to the immediately preceding position \\
induction gain & first-copy minus second-copy loss on repeated random blocks (nats) \\
RW-PCA & reader--writer PCA, the eigendecomposition of the pooled coupling matrix $S$ (Eq.~\ref{eq:pooled}), PCA applied to the model's read/write directions rather than to activations \\
positional coupling & $C^{2}_{\mathrm{pos}}(v) = \lVert W_{\mathrm{pos}}^{\top} v \rVert^2 / \lVert W_{\mathrm{pos}} \rVert_F^2$, a direction's squared coupling with the positional factor, reported in multiples of the $1/d$ chance level; the token coupling $C^{2}_{\mathrm{tok}}$ is the analog with $W_E$ \\
$\operatorname{PosRatio}$ & a band's squared positional coupling divided by its squared token coupling (Eq.~\ref{eq:posratio}); Application 2 deletes the top two bands by this ratio \\
\bottomrule
\end{longtable}}

\end{document}